%%
%% This is file `sample-manuscript.tex',
%% generated with the docstrip utility.
%%
%% The original source files were:
%%
%% samples.dtx  (with options: `manuscript')
%% 
%% IMPORTANT NOTICE:
%% 
%% For the copyright see the source file.
%% 
%% Any modified versions of this file must be renamed
%% with new filenames distinct from sample-manuscript.tex.
%% 
%% For distribution of the original source see the terms
%% for copying and modification in the file samples.dtx.
%% 
%% This generated file may be distributed as long as the
%% original source files, as listed above, are part of the
%% same distribution. (The sources need not necessarily be
%% in the same archive or directory.)
%%
%% The first command in your LaTeX source must be the \documentclass command.
%%%% Small single column format, used for CIE, CSUR, DTRAP, JACM, JDIQ, JEA, JERIC, JETC, PACMCGIT, TAAS, TACCESS, TACO, TALG, TALLIP (formerly TALIP), TCPS, TDSCI, TEAC, TECS, TELO, THRI, TIIS, TIOT, TISSEC, TIST, TKDD, TMIS, TOCE, TOCHI, TOCL, TOCS, TOCT, TODAES, TODS, TOIS, TOIT, TOMACS, TOMM (formerly TOMCCAP), TOMPECS, TOMS, TOPC, TOPLAS, TOPS, TOS, TOSEM, TOSN, TQC, TRETS, TSAS, TSC, TSLP, TWEB.
% \documentclass[acmsmall]{acmart}

%%%% Large single column format, used for IMWUT, JOCCH, PACMPL, POMACS, TAP, PACMHCI
% \documentclass[acmlarge,screen]{acmart}

%%%% Large double column format, used for TOG
% \documentclass[acmtog, authorversion]{acmart}

%%%% Generic manuscript mode, required for submission
%%%% and peer review
% \documentclass[manuscript,screen,review]{acmart}
\documentclass[sigconf, screen]{acmart}

%% Fonts used in the template cannot be substituted; margin 
%% adjustments are not allowed.
%%
%% \BibTeX command to typeset BibTeX logo in the docs
\AtBeginDocument{%
  \providecommand\BibTeX{{%
    \normalfont B\kern-0.5em{\scshape i\kern-0.25em b}\kern-0.8em\TeX}}}

%% Rights management information.  This information is sent to you
%% when you complete the rights form.  These commands have SAMPLE
%% values in them; it is your responsibility as an author to replace
%% the commands and values with those provided to you when you
%% complete the rights form.

% \copyrightyear{2021}
% \acmYear{2021}
% \setcopyright{acmcopyright}\acmConference[CIKM '21]{Proceedings of the 30th ACM
% International Conference on Information and Knowledge Management}{November
% 1--5, 2021}{Virtual Event, QLD, Australia}
% \acmBooktitle{Proceedings of the 30th ACM International Conference on Information
% and Knowledge Management (CIKM '21), November 1--5, 2021, Virtual Event, QLD,
% Australia}
% \acmPrice{15.00}
% \acmDOI{10.1145/3459637.3482225}
% \acmISBN{978-1-4503-8446-9/21/11}

% \setcopyright{acmcopyright}
% \copyrightyear{2018}
% \acmYear{2018}
% \acmDOI{10.1145/1122445.1122456}

% %% These commands are for a PROCEEDINGS abstract or paper.
% \acmConference[Woodstock '18]{Woodstock '18: ACM Symposium on Neural
%   Gaze Detection}{June 03--05, 2018}{Woodstock, NY}
% \acmBooktitle{Woodstock '18: ACM Symposium on Neural Gaze Detection,
%   June 03--05, 2018, Woodstock, NY}
% \acmPrice{15.00}
% \acmISBN{978-1-4503-XXXX-X/18/06}

\usepackage{microtype}
\usepackage{graphicx}
\usepackage{booktabs} % for professional tables

%%%%% NEW MATH DEFINITIONS %%%%%

\usepackage{amsmath,amsfonts,bm}

% Mark sections of captions for referring to divisions of figures

% Highlight a newly defined term

% Figure reference, lower-case.

% Figure reference, capital. For start of sentence

% Section reference, lower-case.

% Section reference, capital.

% Reference to two sections.

% Reference to three sections.

% Reference to an equation, lower-case.
% \def\eqref#1{equation~\ref{#1}}

\def\eqref#1{(\ref{#1})}
% Reference to an equation, upper case

% A raw reference to an equation---avoid using if possible

% Reference to a chapter, lower-case.

% Reference to an equation, upper case.

% Reference to a range of chapters

% Reference to an algorithm, lower-case.

% Reference to an algorithm, upper case.

% Reference to a part, lower case

% Reference to a part, upper case

\def\1{\bm{1}}

% Random variables

% rm is already a command, just don't name any random variables m

% Random vectors

% Elements of random vectors

% Random matrices

% Elements of random matrices

% Vectors

% Elements of vectors

% Matrix

% Tensor
\DeclareMathAlphabet{\mathsfit}{\encodingdefault}{\sfdefault}{m}{sl}
\SetMathAlphabet{\mathsfit}{bold}{\encodingdefault}{\sfdefault}{bx}{n}

% Graph

% Sets

% Don't use a set called E, because this would be the same as our symbol
% for expectation.

% Entries of a matrix

% entries of a tensor
% Same font as tensor, without \bm wrapper

% The true underlying data generating distribution

% The empirical distribution defined by the training set

% The model distribution

% Stochastic autoencoder distributions

 % Laplace distribution

% Wolfram Mathworld says $L^2$ is for function spaces and $\ell^2$ is for vectors
% But then they seem to use $L^2$ for vectors throughout the site, and so does
% wikipedia.

 % See usage in notation.tex. Chosen to match Daphne's book.

\usepackage{balance}

\usepackage{url}
\usepackage{subfig}
\usepackage{graphicx}
\usepackage{amsmath}
\usepackage{xspace}
\usepackage{booktabs}
\usepackage{multirow}

\usepackage{mathtools}
\usepackage{amsthm}
\usepackage{paralist}
\usepackage{enumitem}
\usepackage{tabularx}
\usepackage{adjustbox}
\usepackage{graphicx}

\usepackage[ruled,vlined,linesnumbered,lined,boxed,commentsnumbered,algo2e]{algorithm2e}

\setlength{\belowdisplayskip}{1pt} \setlength{\belowdisplayshortskip}{1pt}
\setlength{\abovedisplayskip}{1pt} \setlength{\abovedisplayshortskip}{1pt}

\makeatletter

\makeatother

\newtheorem{problem}{Problem}
\newtheorem{theorem}{Theorem}

\newcommand{\method}{\textsc{Ugnn}\xspace}
\newcommand{\methodadaptive}{\textsc{Ada-Ugnn}\xspace}

% datasets
\newcommand{\cora}{\textsc{Cora}\xspace}
\newcommand{\citeseer}{\textsc{Citeseer}\xspace}
\newcommand{\pubmed}{\textsc{Pubmed}\xspace}
\newcommand{\blogc}{\textsc{BlogCatalog}\xspace}
\newcommand{\amazoncomp}{\textsc{Amazon-Comp}\xspace}
\newcommand{\amazonphoto}{\textsc{Amazon-Photo}\xspace}
\newcommand{\coauthorcs}{\textsc{Coauthor-CS}\xspace}
\newcommand{\coauthorphys}{\textsc{Coauthor-PH}\xspace}

\SetCommentSty{mycommfont}

\newcommand{\flickr}{\textsc{Flickr}\xspace}
\newcommand{\airusa}{\textsc{Air-USA}\xspace}

\usepackage[normalem]{ulem}

\usepackage{xcolor}

\usepackage{hyperref}
\usepackage{threeparttable}
% Attempt to make hyperref and algorithmic work together better:

%%
%% Submission ID.
%% Use this when submitting an article to a sponsored event. You'll
%% receive a unique submission ID from the organizers
%% of the event, and this ID should be used as the parameter to this command.
%%\acmSubmissionID{123-A56-BU3}

%%
%% The majority of ACM publications use numbered citations and
%% references.  The command \citestyle{authoryear} switches to the
%% "author year" style.
%%
%% If you are preparing content for an event
%% sponsored by ACM SIGGRAPH, you must use the "author year" style of
%% citations and references.
%% Uncommenting
%% the next command will enable that style.
%%\citestyle{acmauthoryear}

%%
%% end of the preamble, start of the body of the document source.
\begin{document}
\fancyhead{}

%%
%% The "title" command has an optional parameter,
%% allowing the author to define a "short title" to be used in page headers.
\title{A Unified View on Graph Neural Networks\\ as Graph Signal Denoising}

%%
%% The "author" command and its associated commands are used to define
%% the authors and their affiliations.
%% Of note is the shared affiliation of the first two authors, and the
%% "authornote" and "authornotemark" commands
%% used to denote shared contribution to the research.
\author{Yao Ma}
% \authornote{Both authors contributed equally to this research.}
\email{yao.ma@njit.edu}
% \orcid{1234-5678-9012}
% \author{G.K.M. Tobin}
% \authornotemark[1]
% \email{webmaster@marysville-ohio.com}
\affiliation{%
  \institution{New Jersey Institute of Technology}
%   \streetaddress{P.O. Box 1212}
  \city{Newark}
  \state{New Jersey}
  \country{USA}
%   \postcode{43017-6221}
}

\author{Xiaorui Liu}
\email{xiaorui@msu.edu}
\affiliation{%
  \institution{Michigan State University}
%   \streetaddress{1 Th{\o}rv{\"a}ld Circle}
  \city{East Lansing}
  \state{Michigan}
  \country{USA}}

\author{Tong Zhao}
  \email{tzhao2@nd.edu}
\affiliation{%
  \institution{University of Notre Dame}
  \city{Notre Dame}
  \state{Indiana}
  \country{USA}
}

\author{Yozen Liu}
  \email{yliu2@snap.com}
\affiliation{%
  \institution{Snap Inc.}
  \city{Santa Monica}
  \state{California}  
  \country{USA}
}

\author{Jiliang Tang}
\email{tangjili@msu.edu}
\affiliation{%
  \institution{Michigan State University}
%   \streetaddress{1 Th{\o}rv{\"a}ld Circle}
  \city{East Lansing}
  \state{Michigan}
  \country{USA}}

\author{Neil Shah}
  \email{nshah@snap.com}
\affiliation{%
  \institution{Snap Inc.}
  \city{Seattle}
  \state{Washington}  
  \country{USA}
}

%%
%% By default, the full list of authors will be used in the page
%% headers. Often, this list is too long, and will overlap
%% other information printed in the page headers. This command allows
%% the author to define a more concise list
%% of authors' names for this purpose.
\renewcommand{\shortauthors}{Trovato and Tobin, et al.}

%%
%% The abstract is a short summary of the work to be presented in the
%% article.
\begin{abstract}
Graph Neural Networks (GNNs) have risen to prominence in learning representations for graph structured data. A single GNN layer typically consists of a feature transformation and a feature aggregation operation. The former normally uses feed-forward networks to transform features, while the latter aggregates the transformed features over the graph. Numerous recent works have proposed GNN models with different designs in the aggregation operation. In this work, we establish mathematically that the aggregation processes in a group of representative GNN models including GCN, GAT, PPNP, and APPNP can be regarded as (approximately) solving a graph denoising problem with a smoothness assumption. Such a unified view across GNNs not only provides a new perspective to understand a variety of aggregation operations but also enables us to develop a unified graph neural network framework \method. To demonstrate its promising potential, we instantiate a novel GNN model, \methodadaptive, derived from \method, to handle graphs with adaptive smoothness across nodes. Comprehensive experiments show the effectiveness of \methodadaptive. The implementation of ADA-UGNN is available at \url{https://github.com/alge24/ADA-UGNN}.
\end{abstract}

\keywords{graph neural networks, graph signal denoising, semi-supervised classification}

%% A "teaser" image appears between the author and affiliation
%% information and the body of the document, and typically spans the
%% page.

%%
%% This command processes the author and affiliation and title
%% information and builds the first part of the formatted document.
\maketitle

% \begin{itemize}
%     \item F
% \end{itemize}
% \vspace{-0.1in}
\section{Introduction}
% \vspace{-0.1in}
Graph Neural Networks (GNNs) have shown great capacity in learning representations for graph-structured data and thus have facilitated many down-stream tasks such as node classification~\citep{kipf2016semi,velivckovic2017graph,ying2018graph,klicpera2018predict} and graph classification~\citep{defferrard2016convolutional,ying2018hierarchical}. As traditional neural models, a GNN model is usually composed of several stacking GNN layers. Given a graph $\mathcal{G}$ with $N$ nodes, a GNN layer typically contains a feature transformation and a feature aggregation operation as:
\begin{align}
    \text{Feature Transformation: } {\bf X}' = f_{trans}({\bf X}); \nonumber\\
 \text{Feature Aggregation: } {\bf H} = f_{agg}({\bf X}';\mathcal{G}); \label{eq:general_model}
\end{align}
where ${\bf X} \in \mathbb{R}^{N\times d_{in}}$ and ${\bf H}\in \mathbb{R}^{N\times d_{out}}$ denote the input and output features of the GNN layer with $d_{in}$ and $d_{out}$ as the corresponding dimensions, respectively. Similar to traditional neural models, non-linear activation layers are commonly added between consecutive GNN layers. The feature transformation operation $f_{trans}(\cdot)$ transforms the input of ${\bf X}$ to ${\bf X}' \in \mathbb{R}^{N\times d_{out}}$ as its output, and the feature aggregation operation $f_{agg}(\cdot ;\mathcal{G})$ updates node features by aggregating the transformed node features via the graph $\mathcal{G}$.% (this can also be seen as ``message passing'' along the edges of the graph). 

In general, different GNN models share similar feature transformations (often, a single feed-forward layer), while adopting different designs for the aggregation operation. We raise a natural question -- is there an intrinsic connection among these feature aggregation operations and their assumptions? The significance of a positive answer to this question is two-fold. Firstly, it offers a new perspective to create a uniform understanding on representative aggregation operations. Secondly, it enables us to develop a general GNN framework that not only provides a unified view on multiple existing representative GNN models, but also has the potential to inspire new ones. In this paper, we aim to build the connection among feature aggregation operations of representative GNN models including GCN~\citep{kipf2016semi}, GAT~\citep{velivckovic2017graph}, PPNP and APPNP~\citep{klicpera2018predict}. In particular, we mathematically establish that the aggregation operations in these models can be unified as the process of exactly, and sometimes approximately, addressing a graph signal denoising problem with Laplacian regularization \citep{shuman2013emerging}. This connection suggests that these aggregation operations share a unified goal: to ensure feature smoothness of connected nodes. With this understanding, we propose a general GNN framework, \method, which not only provides a straightforward, unified view for many existing aggregation operations, but also suggests various promising directions to build new aggregation operations suitable for distinct applications and graph properties.  To demonstrate its potential, we build an instance of \method called \methodadaptive, which is suited for handling varying smoothness properties across nodes, and conduct experiments to show its effectiveness.

% \vspace{-0.1in}
\section{Representative Graph Neural Networks }\label{sec:graph_and_gnn}
% \vspace{-0.1in}
In this section, we introduce notations for graphs and briefly summarize several representative GNN models. A graph can be denoted as $\mathcal{G}=\{\mathcal{V}, \mathcal{E}\}$, where $\mathcal{V}$ and $\mathcal{E}$ are its corresponding node and edge sets. The connections in $\mathcal{G}$ can be represented as an adjacency matrix ${\bf A}\in \mathbb{R}^{N\times N}$, with $N$ the number of nodes in the graph. The Laplacian matrix of the graph $\mathcal{G}$ is denoted as ${\bf L}$. It is defined as ${\bf L} = {\bf D}-{\bf A}$, where ${\bf D}$ is a diagonal degree matrix corresponding to ${\bf A}$. There are also normalized versions of the Laplacian matrix such as ${\bf L}={\bf I} - {\bf D}^{-\frac{1}{2}}{\bf A}{\bf D}^{-\frac{1}{2}}$ or ${\bf L} = {\bf I} - {\bf D}^{-1}{\bf A}$. In this work, we sometimes adopt different Laplacians to establish connections between different GNNs and the graph denoising problem, clarifying in the text. In the following, we generally use ${\bf X}\in \mathbb{R}^{N\times d_{in}}$ and ${\bf H}\in \mathbb{R}^{N\times d_{out}}$ to denote input and output features of GNN layers. ${\bf X}_i$ and ${\bf H}_i$ are used to denote their corresponding $i$-th row, respectively. Next, we describe a few representative GNN models. 
% \vspace{-0.1in}
\subsection{Graph Convolutional Networks}
% \vspace{-0.05in}
Following \eqref{eq:general_model},  a single GCN layer~\citep{kipf2016semi} can be written as follows:
\begin{align}
\text{Feature Transformation: } {\bf X}' = {\bf X}{\bf W};\nonumber\\
\quad \text{Feature Aggregation: } {\bf H} = \tilde{\bf A}{\bf X}',\label{eq:gcn}
\end{align}
where ${\bf W}\in \mathbb{R}^{d_{in}\times d_{out}}$ is a feature transformation matrix, and $\Tilde{\bf A}$ is a normalized adjacency matrix which includes a self-loop, defined as follows:
\begin{align}
    &\Tilde{\bf A} = \hat{\bf D}^{-\frac{1}{2}} \hat{\bf A} \hat{\bf D}^{-\frac{1}{2}}\label{eq:normlaized_a}, \quad \text{with}\quad  \hat{\bf A} = {\bf A} + {\bf I}, 
    % quad \text{and} \quad {\bf D} = \text{diag}(\sum\limits_{j}\hat{\bf A}_{1,j},\dots,\sum\limits_{j}\hat{\bf A}_{N,j} ).
 \end{align}
where $\hat{\bf D}$ is the degree matrix corresponding to $\hat{\bf A}$. In practice, multiple GCN layers can be stacked, where each layer takes the output of its previous layer as input. Non-linear activation functions are included between consecutive layers.

\subsection{Graph Attention Networks}
% \vspace{-0.05in}
Graph Attention Networks (GAT)~\cite{velivckovic2017graph} adopts the same feature transformation operation as GCN in Eq.~\eqref{eq:gcn}. The feature aggregation operation (written node-wise) for a node $i$ is as: 
\begin{align}
    {\bf H}_i = \sum\limits_{j\in \tilde{\mathcal{N}}(i)}\alpha_{ij} {\bf X}'_j,  \quad \text{with} \quad \alpha_{ij}=\frac{\exp \left( e_{ij}\right)}{\sum\limits_{k \in \tilde{\mathcal{N}}(i)} \exp \left(e_{ik}\right)}. \label{eq:gat_propagation}
\end{align}
where $\tilde{\mathcal{N}}(i)= \mathcal{N}(i)\cup\{ i\}$ denotes $i$'s neighbors (self-inclusive), and ${\bf H}_i$ is the $i$-th row of ${\bf H}$, i.e. the output  features of node $i$. In this aggregation operation, $\alpha_{ij}$ is a learnable attention score to differentiate the importance of distinct nodes in the neighborhood. Specifically, $\alpha_{ij}$ is a normalized form of $e_{ij}$, which is modeled as:
\begin{align}
e_{ij}=\text { LeakyReLU }\left(\left[{\bf X}'_i \| {\bf X}'_j \right] {\mathbf{a}}\right)\label{eq:att_e_ij}
\end{align}
where $[\cdot \|\cdot ]$ denotes the concatenation operation and ${\bf a}\in \mathbb{R}^{2d}$ is a learnable vector. Similar to GCN, a GAT model usually consists of multiple stacked GAT layers.

\subsection{Personalized Propagation of Neural Predictions}
% \vspace{-0.05in}
Personalized Propagation of Neural Predictions (PPNP)~\citep{klicpera2018predict} introduces an aggregation operation based on Personalized PageRank (PPR). Specifically, the PPR matrix is defined as $\alpha({\bf I}-(1-\alpha)\tilde{\bf A})^{-1}$, where $\alpha\in (0,1)$ is a hyper-parameter. The $ij$-th element of the PPR matrix specifies the influence of node $i$ on node $j$. The feature transformation operation is modeled as Multi-layer Perception (MLP). The PPNP model can be written in the form of Eq.~\eqref{eq:general_model} as follows:
\begin{align}
    &\text{Feature Transformation: } {\bf X}'_{in} = \text{MLP}({\bf X}); \nonumber\\
    &\text{Feature Aggregation: } {\bf H} =   \alpha({\bf I}-(1-\alpha)\tilde{\bf A})^{-1} {\bf X}'.\label{eq:fa_ppnp}
\end{align}
Unlike GCN and GAT, PPNP only consists of a single feature aggregation layer, but with a potentially deep feature transformation. Since the matrix inverse in Eq.~\eqref{eq:fa_ppnp} is costly, \citet{klicpera2018predict} also introduces a practical, approximated version of PPNP, called APPNP, where the aggregation operation is performed in an iterative way as:
\begin{align}
    {\bf H}^{(k)} = (1-\alpha)\tilde{\bf A}{\bf H}^{(k-1)} + \alpha {\bf X}' \quad k=1,\dots K, \label{eq:appnp_iterative}
\end{align}
where ${\bf H}^{(0)}={\bf X}'$ and ${\bf H}^{(K)}$ is the output of the feature aggregation operation. \citep{klicpera2018predict} shows that ${\bf X}^{(K)}_{out}$ converges to the exact PPNP solution in Eq.~\eqref{eq:fa_ppnp} as $K$ goes to infinity.

% \vspace{-0.05in}
\section{GNNs as Graph Signal Denoising}\label{sec:graph_denoising}
% \vspace{-0.05in}
In this section, we aim to establish the connections between the introduced GNN models and a graph signal denoising problem with Laplacian regularization. 

\begin{problem}[Graph Signal Denoising]\label{pro:graph_denoising_with_lap}
Given a noisy signal ${\bf S}\in \mathbb{R}^{N\times d}$ on a graph $\mathcal{G}$, the goals is to recover a clean signal ${\bf F}\in \mathbb{R}^{N\times d}$, assumed to be smooth over $\mathcal{G}$, by solving the following optimization problem:
\begin{align}
    \arg\min_{\bf F}~~ \mathcal{L} = \|{\bf F}-{\bf S}\|_F^2 + c \cdot tr({\bf F}^{\top} {\bf L} {\bf F})  .\label{eq:opti_signal}
\end{align}
\end{problem}
% \vspace{-1em}

The first term guides ${\bf F}$ to be close to ${\bf S}$, while the second term $tr({\bf F}^{\top} {\bf L} {\bf F})$ is the Laplacian regularization which guides ${\bf F}$'s smoothness over $\mathcal{G}$, with $c>0$'s mediation. Assuming that we adopt the unnormalized version of Laplacian matrix with ${\bf L} = {\bf D}-{\bf A}$ (the adjacency matrix ${\bf A}$ is assumed to be binary), the second term in Eq.~\eqref{eq:opti_signal} can be written in an edge-centric way or a node-centric way as:
\begin{align}
 &\text{edge-centric: }  c  \sum\limits_{(i,j)\in \mathcal{E}} \left\lVert{\bf F}_i-{\bf F}_j \right\rVert_2^2;\\
 & \text{node-centric: } \frac{1}{2} c \sum\limits_{i\in \mathcal{V}} \sum\limits_{j\in \tilde{\mathcal{N}}\left(i\right)} \left\lVert{\bf F}_i-{\bf F}_j  \right\rVert_2^2. \label{eq:node-centric-lap}
\end{align}
Clearly, from the edge-centric view, the regularization term measures the \emph{global smoothness} of ${\bf F}$, which is small when connected nodes share similar features. On the other hand, from the node-centric view, we can view the term  $\sum_{j\in \tilde{\mathcal{N}}\left(i\right)} \left\lVert{\bf F}_j-{\bf F}_j \right\rVert_2^2$ as a \emph{local smoothness} measure for node $i$ as it measures the difference between node $i$ and all its neighbors.  The regularization term can then be regarded as a summation of local smoothness over all nodes. Similar formulations can also be derived to other types of Laplacian matrices. 

In the following subsections, we show connections between aggregation operations in various GNN models and Problem \ref{pro:graph_denoising_with_lap}.

\subsection{Connection to PPNP and APPNP}\label{sec:ppnp_appnp}
% \vspace{-0.05in}
Our main results linking PPNP and APPNP's to Eq. \eqref{eq:opti_signal} are established in Theorems~\ref{thm:ppnp_denoise} and ~\ref{thm:appnp_denoise}, respectively. 
\begin{theorem}
\label{thm:ppnp_denoise}
When we adopt the normalized Laplacian matrix ${\bf L} = {\bf I}-\tilde{\bf A}$, with $\tilde{\bf A}$ defined in Eq.~\eqref{eq:normlaized_a},
the feature aggregation operation in PPNP (Eq.~\eqref{eq:fa_ppnp}) can be regarded as exactly solving Problem~\ref{pro:graph_denoising_with_lap} with ${\bf X}'$ as the input noisy signal and $c=\frac{1}{\alpha} -1$.
%$\alpha = 1/(1+c)$.
\end{theorem}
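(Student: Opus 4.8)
The plan is to exploit the fact that the denoising objective in Problem~\ref{pro:graph_denoising_with_lap} is an unconstrained convex quadratic in $\mathbf{F}$, so its unique global minimizer is pinned down by the first-order stationarity condition. First I would observe that with $\mathbf{L}=\mathbf{I}-\tilde{\mathbf{A}}$ the matrix $\mathbf{L}$ is symmetric and positive semidefinite, so for any $c>0$ the Hessian $2(\mathbf{I}+c\mathbf{L})$ is positive definite and $\mathcal{L}$ is strictly convex. Hence setting the gradient to zero yields the one and only minimizer, and no further appeal to optimality conditions is needed.

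Next I would compute the gradient. Differentiating $\mathcal{L}=\|\mathbf{F}-\mathbf{S}\|_F^2 + c\,tr(\mathbf{F}^{\top}\mathbf{L}\mathbf{F})$ with respect to $\mathbf{F}$ gives $\nabla_{\mathbf{F}}\mathcal{L}=2(\mathbf{F}-\mathbf{S})+2c\,\mathbf{L}\mathbf{F}$, where the symmetry of $\mathbf{L}$ is what lets the quadratic-form derivative collapse to $2\mathbf{L}\mathbf{F}$. Setting this to zero rearranges to $(\mathbf{I}+c\mathbf{L})\mathbf{F}=\mathbf{S}$, so $\mathbf{F}=(\mathbf{I}+c\mathbf{L})^{-1}\mathbf{S}$; the inverse exists because every eigenvalue of $\mathbf{I}+c\mathbf{L}$ is at least $1$.

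The remaining step is purely algebraic: substitute $\mathbf{L}=\mathbf{I}-\tilde{\mathbf{A}}$ and $c=\tfrac{1}{\alpha}-1$ and verify that the closed form collapses to the PPNP propagation. Expanding gives $\mathbf{I}+c\mathbf{L}=(1+c)\mathbf{I}-c\tilde{\mathbf{A}}$, and plugging in $1+c=\tfrac{1}{\alpha}$ together with $c=\tfrac{1-\alpha}{\alpha}$ factors out $\tfrac{1}{\alpha}$ to give $\mathbf{I}+c\mathbf{L}=\tfrac{1}{\alpha}\big(\mathbf{I}-(1-\alpha)\tilde{\mathbf{A}}\big)$. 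Inverting, $\mathbf{F}=\alpha\big(\mathbf{I}-(1-\alpha)\tilde{\mathbf{A}}\big)^{-1}\mathbf{S}$, which is exactly the feature aggregation of Eq.~\eqref{eq:fa_ppnp} once we identify $\mathbf{S}=\mathbf{X}'$.

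Since every step is an exact equality rather than an approximation, there is no genuine analytic obstacle; the only point demanding care is confirming that the two matrices being inverted are nonsingular. For $\mathbf{I}+c\mathbf{L}$ this is immediate from positive semidefiniteness of $\mathbf{L}$, and for $\mathbf{I}-(1-\alpha)\tilde{\mathbf{A}}$ it follows because the eigenvalues of the normalized $\tilde{\mathbf{A}}$ lie in $[-1,1]$ while $0<\alpha<1$, so the spectral radius of $(1-\alpha)\tilde{\mathbf{A}}$ is strictly below $1$. I would record this invertibility as a brief inline remark so that the main derivation reads as a clean sequence of exact identities.
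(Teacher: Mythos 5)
Your proposal is correct and follows essentially the same route as the paper's proof: use convexity of the objective, set the gradient $2(\mathbf{F}-\mathbf{S})+2c\,\mathbf{L}\mathbf{F}$ to zero to obtain $\mathbf{F}^*=(\mathbf{I}+c\mathbf{L})^{-1}\mathbf{S}$, then substitute $\mathbf{L}=\mathbf{I}-\tilde{\mathbf{A}}$ and $c=\tfrac{1}{\alpha}-1$ to recover the PPNP aggregation. The only difference is that you add explicit (and correct) justifications of strict convexity and of the invertibility of $\mathbf{I}+c\mathbf{L}$ and $\mathbf{I}-(1-\alpha)\tilde{\mathbf{A}}$, which the paper leaves implicit.
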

% \vspace{-1em}
\begin{proof}
Note that the objective in Eq.~\eqref{eq:opti_signal} is convex. Hence, the closed-form solution ${\bf F^*}$ of Problem~\eqref{eq:opti_signal} can be obtained by setting its derivative to ${\bf 0}$ as:
\begin{align}
  &\frac{\partial \mathcal{L}}{\partial {\bf F}} = 2( {\bf F} - {\bf S}) + 2c {\bf L} {\bf F} =0 
  \Rightarrow {\bf F^*} = ({\bf I} + c {\bf L})^{-1}{\bf S}   \label{eq:opti_gradient}
\end{align}
Given ${\bf L} = {\bf I} - \Tilde{\bf A}$, 
% Eq.~\eqref{eq:opti_gradient} can be reformulated as: 
${\bf F^*}$ can be reformulated as:
\begin{align}
    {\bf F^*} = \left({\bf I} + c {\bf L}\right)^{-1}{\bf S}=   \left({\bf I} + c \left({\bf I} - \tilde{\bf A}\right) \right)^{-1}{\bf S}\nonumber \\=  \frac{1}{1+c} \left({\bf I} - \frac{c}{1+c}\Tilde{\bf A}\right)^{-1} {\bf S} \label{eq:closed-form}
\end{align}
The feature aggregation operation in Eq.~\eqref{eq:fa_ppnp} is equivalent to the closed-form solution in Eq.~\eqref{eq:closed-form} when we set $c=\frac{1}{\alpha} -1$ and ${\bf S} = {\bf X}'$.
This completes the proof.

\end{proof}
% \vspace{-0.5em}
\begin{theorem}
\label{thm:appnp_denoise}

When we adopt the normalized Laplacian matrix ${\bf L} = {\bf I}-\tilde{\bf A}$, the feature aggregation operation in APPNP (Eq.~\eqref{eq:appnp_iterative}) approximately solves the graph signal denoising problem~\eqref{eq:opti_signal} by iterative gradient descent with ${\bf X}'$ as the input noisy signal, $c=\frac{1}{\alpha}-1$ and stepsize $b=\frac{1}{2+2c}$.
\end{theorem}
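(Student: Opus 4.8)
The plan is to recognize that the APPNP recursion in Eq.~\eqref{eq:appnp_iterative} is literally one step of gradient descent on the denoising objective $\mathcal{L}$ from Problem~\ref{pro:graph_denoising_with_lap}, and then to verify that with the prescribed constants the two updates coincide term by term. Concretely, I would reuse the gradient already computed in the proof of Theorem~\ref{thm:ppnp_denoise}, namely $\frac{\partial \mathcal{L}}{\partial \mathbf{F}} = 2(\mathbf{F}-\mathbf{S}) + 2c\mathbf{L}\mathbf{F}$, and write out one descent step with stepsize $b$ starting from the current iterate $\mathbf{F}^{(k-1)}$.

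First I would substitute $\mathbf{L} = \mathbf{I}-\tilde{\mathbf{A}}$ into the update $\mathbf{F}^{(k)} = \mathbf{F}^{(k-1)} - b\,\partial\mathcal{L}/\partial\mathbf{F}$, collecting separately the terms multiplying $\mathbf{F}^{(k-1)}$, $\tilde{\mathbf{A}}\mathbf{F}^{(k-1)}$, and $\mathbf{S}$. This yields an expression of the form $(1 - 2b - 2bc)\mathbf{F}^{(k-1)} + 2bc\,\tilde{\mathbf{A}}\mathbf{F}^{(k-1)} + 2b\,\mathbf{S}$. The crucial check is that plugging in $b = \frac{1}{2+2c}$ makes the leading coefficient $1 - 2b - 2bc$ vanish, since $2b + 2bc = \frac{2+2c}{2+2c} = 1$; this is exactly what kills the stray $\mathbf{F}^{(k-1)}$ term and leaves a convex combination of $\tilde{\mathbf{A}}\mathbf{F}^{(k-1)}$ and $\mathbf{S}$.

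With that coefficient gone, the update reads $\mathbf{F}^{(k)} = \frac{c}{1+c}\tilde{\mathbf{A}}\mathbf{F}^{(k-1)} + \frac{1}{1+c}\mathbf{S}$. I would then substitute $c = \frac{1}{\alpha}-1$, so that $1+c = \frac{1}{\alpha}$, giving $\frac{c}{1+c} = 1-\alpha$ and $\frac{1}{1+c} = \alpha$; together with $\mathbf{S} = \mathbf{X}'$ this reproduces the APPNP recursion $\mathbf{H}^{(k)} = (1-\alpha)\tilde{\mathbf{A}}\mathbf{H}^{(k-1)} + \alpha\mathbf{X}'$ verbatim, identifying $\mathbf{F}^{(k)}$ with $\mathbf{H}^{(k)}$ and noting both initialize at $\mathbf{X}'$.

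The computation itself is routine; the only point requiring genuine care — and the reason the statement says \emph{approximately} rather than exactly — is the interpretation. Because $\mathcal{L}$ is convex and quadratic, gradient descent with this fixed stepsize converges to the unique minimizer $\mathbf{F}^*$ of Eq.~\eqref{eq:opti_gradient}, i.e.\ the exact PPNP solution; for any finite number of iterations $K$ the APPNP output $\mathbf{H}^{(K)}$ is only an approximation of that optimum, consistent with the convergence claim for Eq.~\eqref{eq:appnp_iterative} as $K\to\infty$. I would close by observing that $b=\frac{1}{2+2c}$ sits below the convergence threshold dictated by the Hessian $2(\mathbf{I}+c\mathbf{L})$ (whose eigenvalues are bounded since those of $\mathbf{L}=\mathbf{I}-\tilde{\mathbf{A}}$ lie in $[0,2)$), so it is precisely the value that both guarantees convergence and makes each descent step agree exactly with one APPNP propagation step, which is what makes the identification clean.
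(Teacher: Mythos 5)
Your proposal is correct and follows essentially the same route as the paper's proof: write one gradient-descent step on $\mathcal{L}$ with $\mathbf{L}=\mathbf{I}-\tilde{\mathbf{A}}$, observe that the stepsize $b=\frac{1}{2+2c}$ annihilates the coefficient $1-2b-2bc$ of $\mathbf{F}^{(k-1)}$, and then substitute $c=\frac{1}{\alpha}-1$, $\mathbf{S}=\mathbf{X}'$ to recover the APPNP recursion with matching initialization. Your added remarks on the meaning of ``approximately'' and on $b$ lying below the convergence threshold of the quadratic's Hessian are correct refinements that the paper leaves implicit (deferring to the cited convergence claim of APPNP as $K\to\infty$).
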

% \vspace{-1.2em}
\begin{proof}
To solve the denoising problem in Eq. ~\eqref{eq:opti_signal}, we take iterative gradient method with the stepsize $b$. Specifically, the $k$-th step is as follows:
\begin{align}
    {\bf F}^{(k)}  &\leftarrow {\bf F}^{(k-1)} - b\cdot \left. \frac{\partial \mathcal{L}}{\partial {\bf F}}  \right|_{ {\bf F} = {\bf F}^{(k-1)}} \nonumber\\
    & = (1-2b-2bc){\bf F}^{(k-1)} + 2b{\bf S}  + 2bc\tilde{\bf A}{\bf F}^{(k-1)}
\end{align}
where ${\bf F}^{(0)}={\bf S}$. When we set the stepsize $b$ as $\frac{1}{2+2c}$, we have the following iterative steps:
\begin{align}
     {\bf F}^{(k)}  \leftarrow \frac{1}{1+c} {\bf S} + \frac{c}{1+c} \tilde{\bf A} {\bf F}^{(k-1)}, k=1, \dots K,
\end{align}
which is equivalent to the iterative aggregation operation of the APPNP in Eq.~\eqref{eq:appnp_iterative} with ${\bf S}={\bf X}'$ and $c=\frac{1}{\alpha}-1$ . 

\end{proof}
% \vspace{-1em}
These two connections provide a new  explanation on the hyper-parameter $\alpha$ in PPNP and APPNP from the graph signal denoising perspective. Specifically, a smaller $\alpha$ indicates a larger $c$, which means the obtained new feature matrix ${\bf H}$ is enforced to be smoother over the graph. 

\subsection{Connection to GCN}
Our main result is established in  Theorem~\ref{thm:gcn_denoise}. 

\begin{theorem}
\label{thm:gcn_denoise}

When we adopt the normalized Laplacian matrix ${\bf L} = {\bf I}-\tilde{\bf A}$, the feature aggregation operation in GCN (Eq.~\eqref{eq:gcn}) can be regarded as solving Problem~\ref{pro:graph_denoising_with_lap} using one-step gradient descent with ${\bf X}'$ as the input noisy signal and stepsize $b=\frac{1}{2c}$. 
 
\end{theorem}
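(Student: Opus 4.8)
The plan is to reuse the gradient of the denoising objective already derived in the proof of Theorem~\ref{thm:ppnp_denoise} and to evaluate a single gradient-descent step from a natural initialization. First I would set the input noisy signal ${\bf S} = {\bf X}'$ and start the iterate at ${\bf F}^{(0)} = {\bf S}$, exactly as in the APPNP argument of Theorem~\ref{thm:appnp_denoise}. Recalling from Eq.~\eqref{eq:opti_gradient} that $\frac{\partial \mathcal{L}}{\partial {\bf F}} = 2({\bf F}-{\bf S}) + 2c{\bf L}{\bf F}$, the one-step update reads ${\bf F}^{(1)} = {\bf F}^{(0)} - b\left.\frac{\partial \mathcal{L}}{\partial {\bf F}}\right|_{{\bf F}={\bf F}^{(0)}}$.

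The observation that drives the argument is that, because the iterate starts precisely at ${\bf S}$, the fidelity term $2({\bf F}^{(0)} - {\bf S})$ of the gradient vanishes, leaving only the Laplacian contribution $2c{\bf L}{\bf S}$. Substituting the normalized Laplacian ${\bf L} = {\bf I} - \tilde{\bf A}$ then gives ${\bf F}^{(1)} = {\bf S} - 2bc({\bf I} - \tilde{\bf A}){\bf S}$. The remaining step is purely algebraic: choosing the stepsize $b = \frac{1}{2c}$ makes the factor $2bc$ collapse to $1$, so the identity part of ${\bf I} - \tilde{\bf A}$ cancels the leading ${\bf S}$ and leaves ${\bf F}^{(1)} = \tilde{\bf A}{\bf S} = \tilde{\bf A}{\bf X}'$, which is exactly the GCN aggregation ${\bf H} = \tilde{\bf A}{\bf X}'$ of Eq.~\eqref{eq:gcn}.

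I expect essentially no analytical obstacle here: unlike the APPNP case, no convergence or limiting argument in $K$ is needed, since a single descent step reproduces the aggregation exactly. The only subtlety I would flag is interpretive rather than technical, namely that the particular stepsize $b = \frac{1}{2c}$ is precisely what forces the match, so the result should be read as saying that GCN performs one gradient step on the denoising objective at a specific learning rate, rather than as exactly minimizing it. I would therefore present the computation compactly and then add a sentence contrasting this single-step, fixed-stepsize reading with the exact minimization obtained for PPNP in Theorem~\ref{thm:ppnp_denoise}.
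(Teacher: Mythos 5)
Your proposal is correct and follows essentially the same route as the paper's own proof: evaluate the gradient at ${\bf F}={\bf S}$ (where the fidelity term vanishes, leaving $2c{\bf L}{\bf S}$), take one descent step, substitute ${\bf L}={\bf I}-\tilde{\bf A}$, and observe that the stepsize $b=\tfrac{1}{2c}$ collapses the update to $\tilde{\bf A}{\bf X}'$. Your closing interpretive remark—that GCN performs a single gradient step at a specific stepsize rather than exact minimization—matches how the paper frames the result (one-step approximation for GCN versus the closed-form solution for PPNP).
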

% \vspace{-1.5em}
\begin{proof}
The gradient with respect to ${\bf F}$ at ${\bf S}$ is 
% {\small
% \begin{align}
   $\left. \frac{\partial \mathcal{L}}{\partial {\bf F}} \right|_{{\bf F} =  {\bf S}} = 2c{\bf L}{\bf S}.$
% \end{align}
% }
Hence, one-step gradient descent for the graph signal denoising problem~\eqref{eq:opti_signal} can be described as:
\begin{align}
    {\bf F}   \leftarrow  {\bf S} - b\left. \frac{\partial \mathcal{L}}{\partial {\bf F}} \right|_{{\bf F} =  {\bf X}} &= {\bf S} - 2bc{\bf L S}  \nonumber\\
    &=  (1-2bc ){\bf S}+  2bc\tilde{\bf A}{\bf S}. 
\end{align}
When stepsize $b=\frac{1}{2c}$ and ${\bf S}={\bf X}'$, we have ${\bf F} \leftarrow \tilde{\bf A}{\bf X}'$, which is the same as the aggregation operation of GCN. 
\end{proof}

With this connection, it is easy to verify that a GCN model with multiple GCN layers can be regarded as solving Problem \ref{pro:graph_denoising_with_lap} \emph{multiple times} with different noisy signals as shown in Algorithm~\ref{alg:gcn} (demonstrating for $K$-layer GCN). Specifically, in each layer, the aggregation component aims to solve Problem \ref{pro:graph_denoising_with_lap} with the transformed features as input noisy signal.

\begin{algorithm2e}[t!]
\footnotesize

\SetKwFunction{Union}{Union}\SetKwFunction{FindCompress}{FindCompress}
\SetKwInOut{Input}{input}\SetKwInOut{Output}{output}
\Input{Node Features ${\bf X}$; Adjacency Matrix $\hat{\bf A}$}
\Output{Refined Node Features ${\bf H}$ }
% \BlankLine
Initialize ${\bf X}^{(0)}\leftarrow {\bf X}, k\leftarrow 1$\;
% \emph{special treatment of the first line}\;
\While{$1\leq k \leq K$}{ {\bf (Feature Transformation)} ${\bf X}_f^{(k-1)} = {\bf X}^{(k-1)} {\bf W}^{(k-1)}$\;
 {\bf (Feature Aggregation)} Let ${\bf X}_f^{(k-1)}$ be the input noisy signal of Eq.~\eqref{eq:opti_signal}, i.e., ${\bf S}={\bf X}_f$. Solve  Problem~\eqref{eq:opti_signal} via one-step gradient descent as per  Theorem~\ref{thm:gcn_denoise} and denote the solution as ${\bf X}^{(k)}_g$\;
 {\bf (Activation)} ${\bf X}^{(k)}= \sigma({\bf X}^{(k)}_g)$, where $\sigma(\cdot)$ denotes an activation function. \;
 $k \leftarrow k+1$\;
}
${\bf H} = {\bf X^{(K)}}$\;
\Return ${\bf H}$

\caption{$K$-layer GCN As Graph Signal Denoising }\label{alg:gcn}
\end{algorithm2e}
% \vspace{-0.1in}
\subsection{Connection to GAT}\label{sec:connection_to_gat}
% \vspace{-0.05in}
To establish the connection between graph signal denoising and GAT~\citep{velivckovic2017graph}, in this subsection, we adopt an unnormalized version of the Laplacian, defined based on the adjacency matrix with self-loop $\hat{\bf A}$, i.e. ${\bf L}=\hat{\bf D}-\hat{\bf A}$. Then, the denoising problem in Eq.~\eqref{eq:opti_signal} can be rewritten from a node-centric view as:
\begin{equation}
\resizebox{0.88\linewidth}{!}{
$\arg\min_{\bf F}~~ \mathcal{L} = \sum\limits_{i\in \mathcal{V}} \|{\bf F}_i-{\bf S}_i\|_2^2 +  \frac{1}{2} \sum\limits_{i\in \mathcal{V}} c \sum\limits_{j\in \tilde{\mathcal{N}}(i)} \left\|{\bf F}_i-{\bf F}_j \right\|_2^2,$} \label{eq:opti_signal_node_ecentric_1}
\end{equation}

where $\tilde{\mathcal{N}}(i)= \mathcal{N}(i)\cup\{ i\}$ denotes the neighbors (self-inclusive) of node $i$. In Eq.~\eqref{eq:opti_signal_node_ecentric_1}, the constant $c$ is shared by all nodes, which indicates that the same level of \emph{local smoothness} is enforced to all nodes.  By relaxing this assumption, 
 
instead of a unified $c$ as in Eq.~\eqref{eq:opti_signal_node_ecentric_1}, we can consider a node-dependent $c_i$ for each node $i$:

\begin{equation}
\resizebox{0.88\linewidth}{!}{
$\arg\min_{\bf F}\mathcal{L} = \sum\limits_{i\in \mathcal{V}} \left\lVert{\bf F}_i-{\bf S}_i\right\rVert_2^2 +  \frac{1}{2} \sum\limits_{i\in \mathcal{V}} c_i \sum\limits_{j\in \tilde{\mathcal{N}}\left(i\right)} \left\lVert{\bf F}_i-{\bf F}_j \right\rVert_2^2$.}  \label{eq:adapative_denoising}
\end{equation}

We next show that the aggregation operation in GAT is closely connected to an approximate solution of problem~\eqref{eq:adapative_denoising} with the help of Theorem \ref{thm:gat_denoise}. 

\begin{theorem}\label{thm:gat_denoise}
With adaptive stepsize $b_i =1/\sum\limits_{j\in \tilde{\mathcal{N}}(i)}(c_i+c_j)$ for each node $i$, the process of taking one step of gradient descent from ${\bf S}$ to solve Eq.~\eqref{eq:adapative_denoising} is as follows:
\begin{align}
    {\bf F}_i \leftarrow   \sum\limits_{j\in \tilde{\mathcal{N}}(i)} b_i(c_i+c_j) {\bf S}_j. \label{eq:agg_diff_smooth_att}
\end{align}
\end{theorem}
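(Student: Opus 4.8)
The plan is to follow the same recipe used in the proofs of Theorems~\ref{thm:gcn_denoise} and~\ref{thm:appnp_denoise}: compute the gradient of the node-centric objective in Eq.~\eqref{eq:adapative_denoising} with respect to a single node feature ${\bf F}_i$, evaluate it at ${\bf F}={\bf S}$, and then substitute the proposed adaptive stepsize into the one-step gradient descent update. Because the objective couples nodes only through pairwise smoothness terms, I would differentiate row-wise rather than forming a global matrix gradient.

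The heart of the argument is computing $\partial\mathcal{L}/\partial{\bf F}_i$ correctly. The subtle point is that ${\bf F}_i$ enters the double sum in two distinct ways: once as the ``center'' node of its own local smoothness term $c_i\sum_{j\in\tilde{\mathcal{N}}(i)}\|{\bf F}_i-{\bf F}_j\|_2^2$, and again as a neighbor appearing inside the local smoothness terms of each of its neighbors, contributing $c_j\|{\bf F}_j-{\bf F}_i\|_2^2$ for every $j$ with $i\in\tilde{\mathcal{N}}(j)$. Differentiating the first group gives $c_i\sum_{j\in\tilde{\mathcal{N}}(i)}({\bf F}_i-{\bf F}_j)$, and the second gives $\sum_{j:\,i\in\tilde{\mathcal{N}}(j)} c_j({\bf F}_i-{\bf F}_j)$. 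Invoking the symmetry of the self-looped neighborhood relation, $i\in\tilde{\mathcal{N}}(j)\iff j\in\tilde{\mathcal{N}}(i)$, both sums range over $j\in\tilde{\mathcal{N}}(i)$ and combine into $\sum_{j\in\tilde{\mathcal{N}}(i)}(c_i+c_j)({\bf F}_i-{\bf F}_j)$. Adding the derivative $2({\bf F}_i-{\bf S}_i)$ of the fidelity term yields the full gradient.

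Next I would evaluate this gradient at ${\bf F}={\bf S}$, where the fidelity contribution vanishes, leaving $\sum_{j\in\tilde{\mathcal{N}}(i)}(c_i+c_j)({\bf S}_i-{\bf S}_j)$. The update ${\bf F}_i\leftarrow{\bf S}_i-b_i\sum_{j\in\tilde{\mathcal{N}}(i)}(c_i+c_j)({\bf S}_i-{\bf S}_j)$ then splits into a term proportional to ${\bf S}_i$ and a weighted sum of neighbor signals. Substituting $b_i=1/\sum_{j\in\tilde{\mathcal{N}}(i)}(c_i+c_j)$ forces the coefficient of the standalone ${\bf S}_i$ to equal one, so it cancels exactly against the leading ${\bf S}_i$, collapsing the update to ${\bf F}_i\leftarrow\sum_{j\in\tilde{\mathcal{N}}(i)}b_i(c_i+c_j){\bf S}_j$, which is Eq.~\eqref{eq:agg_diff_smooth_att}.

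I anticipate that the only real obstacle is the bookkeeping in differentiating the regularizer, specifically remembering the second contribution in which ${\bf F}_i$ appears as a neighbor with weight $c_j$. Omitting this term would erroneously produce coefficients $2c_i$ rather than $(c_i+c_j)$ and would break the intended parallel with GAT's attention weights; the self-loop convention $i\in\tilde{\mathcal{N}}(i)$ and the symmetry of the underlying adjacency are precisely what guarantee the two index sets coincide and the weights symmetrize cleanly.
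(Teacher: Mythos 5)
Your proposal is correct and takes essentially the same route as the paper's proof: differentiate the objective row-wise while accounting for the $c_j$ contributions arising because node $i$ also appears in its neighbors' local smoothness terms, evaluate the gradient at ${\bf F}={\bf S}$ so the fidelity term vanishes, and observe that the stepsize $b_i$ exactly cancels the standalone ${\bf S}_i$ coefficient. The subtle bookkeeping point you flag is precisely the one the paper notes ("$c_j$ appears since $i$ is also in the neighborhood of $j$"), so there is no gap.
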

% \vspace{-1.6em}
\begin{proof}
The gradient of Eq.~\eqref{eq:adapative_denoising} with respect to ${\bf F}$ focusing on a node $i$ can be written as:
\begin{align}
    \frac{\partial \mathcal{L}}{\partial {\bf F}_i} = 2\left({\bf F}_i- {\bf S}_i\right) +  \sum\limits_{j\in \tilde{\mathcal{N}}\left(i\right)} \left(c_i+c_j\right) \left({\bf F}_i-{\bf F}_j \right),
    % \vspace{-0.1em}
\end{align} %
where $c_j$ in the second term appears since $i$ is also in the neighborhood of $j$. Then, the gradient at ${\bf S}$ is
{\small
% \begin{align}
  $ \left. \frac{\partial \mathcal{L}}{\partial {\bf F}_i}  \right|_{ {\bf F}  = {\bf S}}  = \sum\limits_{j\in \tilde{\mathcal{N}}\left(i\right)} \left(c_i+c_j\right)  \left({\bf S}_i-{\bf S}_j \right).
  $
% \end{align}
}
Thus, one step of gradient descent starting from ${\bf S}$ with stepsize $b_i$ is as follows: 

\begin{align}
{\bf F}_i  &\leftarrow {\bf S}_i - b_i \cdot  \left. \frac{\partial \mathcal{L}}{\partial {\bf F}_i} \right|_{{\bf F} = {\bf S}}\nonumber
\\&= \Big( 1 - b_i\sum\limits_{j\in \tilde{\mathcal{N}}\left(i\right)} \left(c_i+c_j\right) \Big) {\bf S}_i +  \sum\limits_{j\in \tilde{\mathcal{N}}\left(i\right)} b_i \left(c_i+c_j\right){\bf S}_j
\label{eq:agg_diff_smooth}
\end{align}
Given $b_i = 1/\sum\limits_{j\in \tilde{\mathcal{N}}(i)} (c_i+c_j)$,  Eq.~\eqref{eq:agg_diff_smooth} can be rewritten as 
\begin{align}
     {\bf F}_i \leftarrow   \sum\limits_{j\in \tilde{\mathcal{N}}(i)} b_i(c_i+c_j) {\bf S}_j, \nonumber
\end{align}
which completes the proof.
\end{proof}

Eq.~\eqref{eq:agg_diff_smooth_att} resembles the aggregation operation of GAT in Eq.~\eqref{eq:gat_propagation} if we treat $b_i(c_i +c_j)$ as the attention score $\alpha_{ij}$. Note that we have {\scriptsize$\sum\limits_{j\in \tilde{\mathcal{N}}(i)}(c_i+c_j)=1/b_i$}, for all $i\in \mathcal{V}$, so, $(c_i+c_j)$ can be regarded as the unnormalized attention score and $b_i$ as the normalization constant. 

We further compare $b_i(c_i+c_j)$ with $\alpha_{ij}$ by investigating the formulation of $e_{ij}$ in Eq.~\eqref{eq:att_e_ij}. Eq.~\eqref{eq:att_e_ij} can be rewritten as:
\begin{align}
e_{ij}=\text { LeakyReLU }({\bf X}'_i  {\bf a}_1 + {\bf X}'_j {\mathbf{a}_2}),\label{eq:att_e_ij_add}
\end{align}
% \vspace{-0.1em}
where ${\bf a}_1\in \mathbb{R}^d$ and ${\bf a}_2\in \mathbb{R}^d$ are learnable column vectors, which can be concatenated to form ${\bf a}$ in Eq.~\eqref{eq:att_e_ij}. Comparing $e_{ij}$ with $(c_i+c_j)$, we find that they take a similar form. Specifically, ${\bf X}'_i {\bf a}_1$ and ${\bf X}'_j {\bf a}_2$ can be regarded as the approximations of $c_i$ and $c_j$, respectively. In this way, $e_{ij}$ can be considered as a learnable function estimating $c_i +c_j$. Correspondingly, $b_i(c_i+c_j)$ and $\alpha_{ij}$ are the normalized versions of $c_i+c_j$ and $e_{ij}$, respectively. The difference between $b_i(c_i+c_j)$ and $\alpha_{ij}$ is that the normalization in Eq.~\eqref{eq:agg_diff_smooth_att} for $b_i(c_i+c_j)$ is achieved via summation rather than a softmax as Eq.~\eqref{eq:gat_propagation} for $\alpha_{ij}$. Since GAT makes the $c_i$ and $c_j$ learnable, they also include a non-linear activation in calculating $e_{ij}$. Note that similarly to multi-layer GCN illustrated in Algorithm~\ref{alg:gcn}, multi-layer GAT can be also regarded as solving a series of graph denoising problems in Eq.~\eqref{eq:adapative_denoising}.

\section{\method: A Unified GNN Framework via Graph Signal Denoising}
\label{sec:adaptiveGNN}

In the previous section, we established that the aggregation operations in PPNP, APPNP, GCN and GAT are intimately connected to the graph signal denoising problem with (generalized) Laplacian regularization (Problem \ref{pro:graph_denoising_with_lap}). In particular, all their aggregation operations aim to ensure feature smoothness: either a global smoothness over the graph as in PPNP, APPNP and GCN, or a local smoothness for each node as in GAT. This understanding allows us to develop a unified aggregation operation via the following, more general denoising problem: 

\begin{problem}[Generalized Graph Signal Denoising Problem]
\begin{align}
    \arg\min_{\bf F}~~ \mathcal{L} = \|{\bf F}-{\bf S}\|_F^2 + r(\mathcal{C}, {\bf F}, \mathcal{G}),
\end{align} 
% \vspace{-1em}
where $r(\mathcal{C}, {\bf F}, \mathcal{G})$ denotes a flexible regularization term to enforce some prior of $\mathbf{F}$ encoded by $\mathcal{G}$.
% \vspace{-1em}
\label{prob:general_denoising}
\end{problem}
 Note that we overload the notation $\mathcal{C}$ here: it can function as a scalar (like a global constant in GCN), a vector (like node-wise constants in GAT) or even a matrix (edge-wise constants) if we want to give flexibility to each node pair. Different choices of $r(\cdot)$ imply different feature aggregation operations. Besides PPNP, APPNP, GCN and GAT, there are aggregation operations in more GNN models that can be associated with Problem \ref{prob:general_denoising} with different regularization terms such as PairNorm \citep{zhao2019pairnorm} and DropEdge~\citep{rong2019dropedge}.  These two recently proposed enhancements for developing deeper GNN models correspond to the following choices for $r(\mathcal{C}, {\bf F}, \mathcal{G})$:
\begin{align}
    &\text{PairNorm: }   \sum\limits_{(i,j) \in \mathcal{E}} \mathcal{C}_p \cdot \left\| {\bf F}_i - {\bf F}_j\right\|_2^2 -  \sum\limits_{(i,j) \not\in \mathcal{E}} \mathcal{C}_n \cdot \left\| {\bf F}_i - {\bf F}_j\right\|_2^2,  \nonumber\\
    & \text{DropEdge: }  \sum\limits_{(i,j) \in \mathcal{E}} \mathcal{C}_{ij} \cdot \left\| {\bf F}_i - {\bf F}_j\right\|_2^2, \ \text{where} \ \mathcal{C}_{ij}\in \{0,1\}. \nonumber    %\label{eq:reg_dropedge}
\end{align}
For PairNorm, $\mathcal{C}$ consists of $\mathcal{C}_p, \mathcal{C}_n>0$ and the regularization term ensures connected nodes to be similar while disconnected nodes to be dissimilar. For DropEdge, $\mathcal{C}$ is a sparse matrix having the same shape as the adjacency matrix. For each edge $(i,j)$, its corresponding $\mathcal{C}_{ij}$ is sampled from a Bernoulli distribution with mean $1-q$, where $q$ is a pre-defined dropout rate. The above mentioned regularization terms are all related to the Laplacian regularization. Other regularization terms can also be adopted, which may lead to novel designs of GNN layers. For example, if we aim to enforce piece-wise linearity in the clean signal, we can adopt $r(\mathcal{C}, {\bf F},\mathcal{G} ) = \mathcal{C} \cdot \left\| {\bf L} {\bf F} \right\|_1$ designed for trend filtering~\citep{tibshirani2014adaptive,wang2016trend}.

With these discussions, we propose a unified framework (\method) to design GNN layers from the graph signal processing perspective: 1) Design a graph regularization term $r(\mathcal{C}, {\bf F}, \mathcal{G})$ in Problem \ref{prob:general_denoising} according to specific applications; 2) \text{Feature Transformation: } ${\bf X}' = f_{trans}({\bf X})$; and 3) \text{Feature Aggregation: } \text{Solve Problem \ref{prob:general_denoising} with}  ${\bf S} = {\bf X}'$ and the designed $r(\mathcal{C}, {\bf F}, \mathcal{G})$.

To demonstrate the potential of \method, we next introduce a new GNN model, \methodadaptive by instantiating \method with $r(\mathcal{C}, {\bf F}, \mathcal{G})$ enforcing adaptive local smoothness across nodes.  
% \vspace{-0.2em}
\section{\methodadaptive: Adaptive Local Smoothing with \method}\label{sec:ada-ugnn}
% \vspace{-0.2em}
From the graph signal denoising perspective,  PPNP, APPNP, and GCN enforces global smoothness by penalizing the difference with a constant $\mathcal{C}$ for all nodes. However, real-world graphs may consist of multiple groups of nodes which have different behaviors in connecting to similar neighbors.  For example, Section~\ref{sec:node_classification} shows several graphs with varying distributions of local smoothness (as measured by label homophily): summarily, not all nodes are highly label-homophilic, and some nodes have considerably ``noisier'' neighborhoods than others. Moreover, as suggested by~\citet{wu2019adversarial,jin2020graph}, adversarial attacks on graphs tend to promote such label noise in graphs by connecting nodes from different classes and disconnecting nodes from the same class, rendering resultant graphs with varying local smoothness across nodes. Under these scenarios, a constant $\mathcal{C}$ might not be optimal, suggesting the value of adaptive (i.e. non-constant) smoothness assumptions. As shown in Section~\ref{sec:connection_to_gat}, by viewing GAT's aggregation as a solution to regularized graph signal denoising, GAT can be regarded as adopting an adaptive $\mathcal{C}$ for different nodes, which facilitates adaptive local smoothness. However, in GAT, the graph denoising problem is solved by a single step of gradient descent, which might still be suboptimal. Furthermore, when modeling the local smoothness factor $c_i$ in Eq.~\eqref{eq:agg_diff_smooth_att}, GAT only uses features of node $i$ as input, which may not be optimal since intuitively, understanding $c_i$ as local smoothness, it should be intrinsically related to $i$'s \emph{neighborhood}. In this section, we adapt this notion directly into the \method framework by introducing a new regularization term, and develop a resulting GNN model (\methodadaptive) which aims to enforce adaptive local smoothness to nodes in a different manner to GAT. We then utilize an iterative gradient descent method to approximate the optimal solution for Problem \ref{prob:general_denoising} with the following regularization term:
\begin{align}
     r(\mathcal{C},{\bf F}, \mathcal{G}) =  \frac{1}{2}  \sum\limits_{i\in \mathcal{V}} \mathcal{C}_i \sum\limits_{j\in \tilde{\mathcal{N}}(i)} \left\| \frac{{\bf F}_i}{\sqrt{d_i}}-\frac{{\bf F}_j}{\sqrt{d_j}}. \right\|_2^2 \label{eq:normalized_adaptive}
\end{align}
where $d_i, d_j$ denote the degree of nodes $i$ and $j$ respectively, and $\mathcal{C}_i$ indicates the smoothness factor of node $i$, which is assumed to be a fixed scalar. Note that, the above regularization term can be regarded as a generalized version of the regularization term used in PPNP, APPNP, and GCN. Similar to PPNP and APPNP, \methodadaptive only consists of a single GNN layer. We next describe the feature transformation and  aggregation operations of \methodadaptive, and show how to derive the model via \method.  

\subsection{Feature Transformation}
Similar to PPNP and APPNP, we adopt MLP for the feature transformation. Specifically, for a node classification task, the dimension of the output of the feature transformation ${\bf X}'$ is the number of classes in the graph. 

\subsection{Feature Aggregation}

We use iterative gradient descent to solve Problem \ref{prob:general_denoising} with the regularization term in Eq.~\eqref{eq:normalized_adaptive}. The iterative gradient descent steps are stated in the following theorem.

\begin{theorem}\label{thm:ada-ugnn_denoise}
With adaptive stepsize {\footnotesize $b_i =  1/\left(2+ \sum\limits_{j\in \tilde{\mathcal{N}}(i)} (\mathcal{C}_i+\mathcal{C}_j)/{d_i}\right)$} for each node $i$, the iterative gradient descent steps to solve Problem~\ref{prob:general_denoising} with the regularization term in Eq.~\eqref{eq:normalized_adaptive} is as follows:
\begin{equation}
\resizebox{0.88\linewidth}{!}{
$       {\bf F}^{(k)}_i \leftarrow 2b{\bf S}_i + b_i  \sum\limits_{j\in \tilde{\mathcal{N}}(i)} ({\mathcal{C}_i+\mathcal{C}_i})  \frac{{\bf F}^{(k-1)}_j}{\sqrt{d_i d_j}};\quad  k=1,\dots ,$
}\label{eq:iterative_simplified}
\end{equation}

where {\small${\bf F}^{(0)}_i = {\bf S}_i$}.

\begin{proof}
The gradient of the optimization problem~\ref{prob:general_denoising} with the regularization term in Eq.~\eqref{eq:normalized_adaptive} with respect to ${\bf F}$ (focusing on node $i$) is as follows:
\begin{align}
    \frac{\partial \mathcal{L}}{\partial {\bf F}_i} = 2({\bf F}_i - {\bf S}_i) +  \sum\limits_{v_j\in \tilde{\mathcal{N}}(v_i)} \frac{\mathcal{C}_i + \mathcal{C}_j}{\sqrt{d_i}}  \left(\frac{ {\bf F}_i}{\sqrt{d_i}}-\frac{{\bf F}_j}{\sqrt{d_j}} \right),  \label{eq:gradient}
\end{align}
where $\mathcal{C}_j$ in the second term appears since node $i$ is also in the neighborhood of node $j$.
The iterative gradient descent steps with adaptive stepsize $b_i$ can be formulated as follows:
\begin{align}
  {\bf F}^{(k)}_i \leftarrow {\bf F}^{(k-1)}_i- b_i\cdot \left.\frac{\partial \mathcal{L}}{\partial {\bf F}_i}\right|_{{\bf F}_i={\bf F}^{(k-1)}_i} ; \quad  k=1,\dots \label{eq:iterative_adpative}
\end{align}
With the gradient in Eq.~\eqref{eq:gradient}, the iterative steps in Eq.~\eqref{eq:iterative_adpative} can be rewritten as:
\begin{align}
    {\bf F}^{(k)}_i \leftarrow & (1-2b_i-b_i \sum\limits_{v_j\in \tilde{\mathcal{N}}(v_i)} \frac{\mathcal{C}_i+\mathcal{C}_j}{d_i} ) {\bf F}^{(k-1)}_i + 2b_i{\bf S}_i \nonumber \\
     &+ b_i  \sum\limits_{v_j\in \tilde{\mathcal{N}}(v_i)} ({\mathcal{C}_i+\mathcal{C}_j})  \frac{{\bf F}^{(k-1)}_j}{\sqrt{d_i d_j}};\quad  k=1,\dots  \label{eq:iter}
\end{align}
Given $b_i  =  1/\left(2+ \sum\limits_{v_j\in \tilde{\mathcal{N}}(v_i)} (\mathcal{C}_i+\mathcal{C}_j)/{d_i}\right)$, the iterative steps in Eq.~\eqref{eq:iter} can be re-written as follows:
\begin{align}
       {\bf F}^{(k)}_i \leftarrow 2b_i{\bf S}_i + b_i  \sum\limits_{v_j\in \tilde{\mathcal{N}}(v_i)} ({\mathcal{C}_i+\mathcal{C}_j})  \frac{{\bf F}^{(k-1)}_j}{\sqrt{d_i d_j}};\quad  k=1,\dots,
\end{align}
with ${\bf F}^{(0)}_i ={\bf S}_i$, which completes the proof.
\end{proof}
\end{theorem}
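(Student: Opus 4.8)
The plan is to follow the template of Theorems~\ref{thm:gcn_denoise} and~\ref{thm:gat_denoise}: compute the row-wise gradient of the denoising objective in Problem~\ref{prob:general_denoising} equipped with the regularizer of Eq.~\eqref{eq:normalized_adaptive}, substitute it into a gradient-descent step with a node-specific stepsize $b_i$, and then choose $b_i$ so that the coefficient of the ``old'' iterate ${\bf F}^{(k-1)}_i$ collapses to zero, leaving exactly Eq.~\eqref{eq:iterative_simplified}.

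First I would differentiate $\mathcal{L}=\|{\bf F}-{\bf S}\|_F^2 + r(\mathcal{C},{\bf F},\mathcal{G})$ with respect to the single row ${\bf F}_i$. The fidelity term yields $2({\bf F}_i-{\bf S}_i)$ at once. The delicate step -- and the one I expect to be the main obstacle -- is differentiating the regularizer, since ${\bf F}_i$ appears in the double sum of Eq.~\eqref{eq:normalized_adaptive} in two distinct roles: as the center node $i$ (carrying weight $\mathcal{C}_i$), and as a neighbor inside the sum attached to each of its own neighbors $j$ (carrying weight $\mathcal{C}_j$, because $i\in\tilde{\mathcal{N}}(j)$ whenever $j\in\tilde{\mathcal{N}}(i)$ for the symmetric, self-looped neighborhood). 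Collecting both occurrences -- and noting that the self term $j=i$ vanishes since $\frac{{\bf F}_i}{\sqrt{d_i}}-\frac{{\bf F}_i}{\sqrt{d_i}}=0$ -- the two contributions merge into the single factor $\mathcal{C}_i+\mathcal{C}_j$, producing precisely the gradient in Eq.~\eqref{eq:gradient}. Getting the chain-rule factor $1/\sqrt{d_i}$ and the sign right on each of the two terms is where care is needed; everything else is mechanical.

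Next I would insert this gradient into the update ${\bf F}^{(k)}_i \leftarrow {\bf F}^{(k-1)}_i - b_i\,\partial\mathcal{L}/\partial{\bf F}_i|_{{\bf F}={\bf F}^{(k-1)}}$ and regroup by which row appears. The terms in ${\bf F}^{(k-1)}_i$ gather into the coefficient $1-2b_i-b_i\sum_{j\in\tilde{\mathcal{N}}(i)}(\mathcal{C}_i+\mathcal{C}_j)/d_i$; the fidelity term supplies $2b_i{\bf S}_i$; and the cross terms supply $b_i\sum_{j\in\tilde{\mathcal{N}}(i)}(\mathcal{C}_i+\mathcal{C}_j){\bf F}^{(k-1)}_j/\sqrt{d_id_j}$, giving Eq.~\eqref{eq:iter}.

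Finally, setting the ${\bf F}^{(k-1)}_i$ coefficient to zero forces $1=b_i(2+\sum_{j\in\tilde{\mathcal{N}}(i)}(\mathcal{C}_i+\mathcal{C}_j)/d_i)$, which is exactly the stepsize $b_i$ in the statement; with this choice the self term drops out and the recursion reduces to Eq.~\eqref{eq:iterative_simplified}, initialized at ${\bf F}^{(0)}_i={\bf S}_i$. This completes the argument. Apart from the two-role differentiation of the regularizer, every step is routine algebraic bookkeeping.
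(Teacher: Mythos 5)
Your proposal is correct and follows essentially the same route as the paper's own proof: compute the row-wise gradient (with the $\mathcal{C}_i+\mathcal{C}_j$ factor arising from the two roles of ${\bf F}_i$ in the regularizer), substitute into the gradient-descent update, and observe that the prescribed stepsize $b_i$ annihilates the coefficient of ${\bf F}^{(k-1)}_i$. Your explicit treatment of the double-counting (including the vanishing $j=i$ self term) is slightly more careful than the paper's one-line remark, but the argument is the same.
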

Following the iterative solution in Eq.~\eqref{eq:iterative_simplified}, we model the aggregation operation (for node $i$) for \methodadaptive as:
\begin{equation}
    \resizebox{0.88\linewidth}{!}{
    $    {\bf H}^{(k)}_i \leftarrow 2b_i{\bf X}'_i + b_i  \sum\limits_{v_j\in \tilde{\mathcal{N}}(v_i)} ({\mathcal{C}_i+\mathcal{C}_j})  \frac{{\bf H}^{(k-1)}_j}{\sqrt{d_i d_j}};\quad  k=1,\dots K, $}
    \label{eq:iter_gnn}
\end{equation}

where ${\bf X}'$ is the output of feature transformation, ${\bf H}^{(0)}={\bf X}'$, $K$ is the number gradient descent iterations, $\mathcal{C}_i$ can be considered as a positive scalar to control the level of ``local smoothness'' for node $i$, and $b_i$ can be calculated from $\{\mathcal{C}_j | j\in \tilde{\mathcal{N}}(i)\}$ as {\footnotesize{$b_i =  1/\left(2+ \sum\limits_{j\in \tilde{\mathcal{N}}(i)} (\mathcal{C}_i+\mathcal{C}_j)/{d_i}\right)$}}. However, in practice, $\mathcal{C}_i$ is usually unknown. One possible solution is to treat $\mathcal{C}_i$ as hyper-parameters.   But, treating $\mathcal{C}_i$ as hyper-parameters for all nodes is impractical, since there are in total $N$ of them and we do not have their prior knowledge. Thus, we instead parameterize $\mathcal{C}_i$ as a function of the information of the neighborhood of node $i$ as follows:
\begin{align}
    \mathcal{C}_i =  s\cdot \sigma \left( h_1\left(h_2\left( \left\{ {\bf X}'_j | j\in \tilde{\mathcal{N}}(i)  \right\}\right) \right) \right), \label{eq:cal_ci}
\end{align}
where $h_2(\cdot)$ is a function to transform the neighborhood information of node $i$ to a vector, while $h_1(\cdot)$ further transforms it to a scalar. $\sigma(\cdot)$ denotes the sigmoid function, which maps the output scalar from $h_1(\cdot)$ to $(0,1)$ and $s$ can be treated as a hyper-parameter controlling the upper bound of $\mathcal{C}_i$. $h_1(\cdot)$ can be modeled as a single layer fully-connected neural network. There are different designs for  $h_2(\cdot)$ such as channel-wise mean or variance~\citep{corso2020principal}. In this paper, we adopt channel-wise variance as the $h_2(\cdot)$ function (as a measure of diversity). 
APPNP can be regarded a special case of \methodadaptive, where {\tiny $\sigma \left( h_1\left(h_2\left( \left\{ {\bf X}'_j | j\in \tilde{\mathcal{N}}(i)  \right\}\right) \right) \right)$} produces a constant $1$ (i.e, $\mathcal{C}_i=s$) for all nodes. For the node classification task, the representation ${\bf H}^{(K)}$, which is obtained after $K$ iterations as in Eq.~\eqref{eq:iter_gnn}, is directly softmax normalized row-wisely and its $i$-th row  indicates the discrete class distribution of node $i$.

\section{Experimental Evaluation }\label{sec:experiments}
Although our contributions in this work are primarily towards mathematical understanding and unification of GNNs, in this section, we experimentally evaluate our proposed \methodadaptive  to demonstrate the promise of deriving new aggregations as solutions of denoising problems (not in striving for state-of-the-art GNN performance).  We begin with node classification experiments on standard graphs.  Next, we demonstrate the effectiveness of the proposed \methodadaptive in handling adaptive smoothness as manifested via adversarial attacks.  

\begin{figure*}[]%
%\vskip -0.2em
     \centering
     \subfloat[\cora]{{\includegraphics[width=0.19\linewidth]{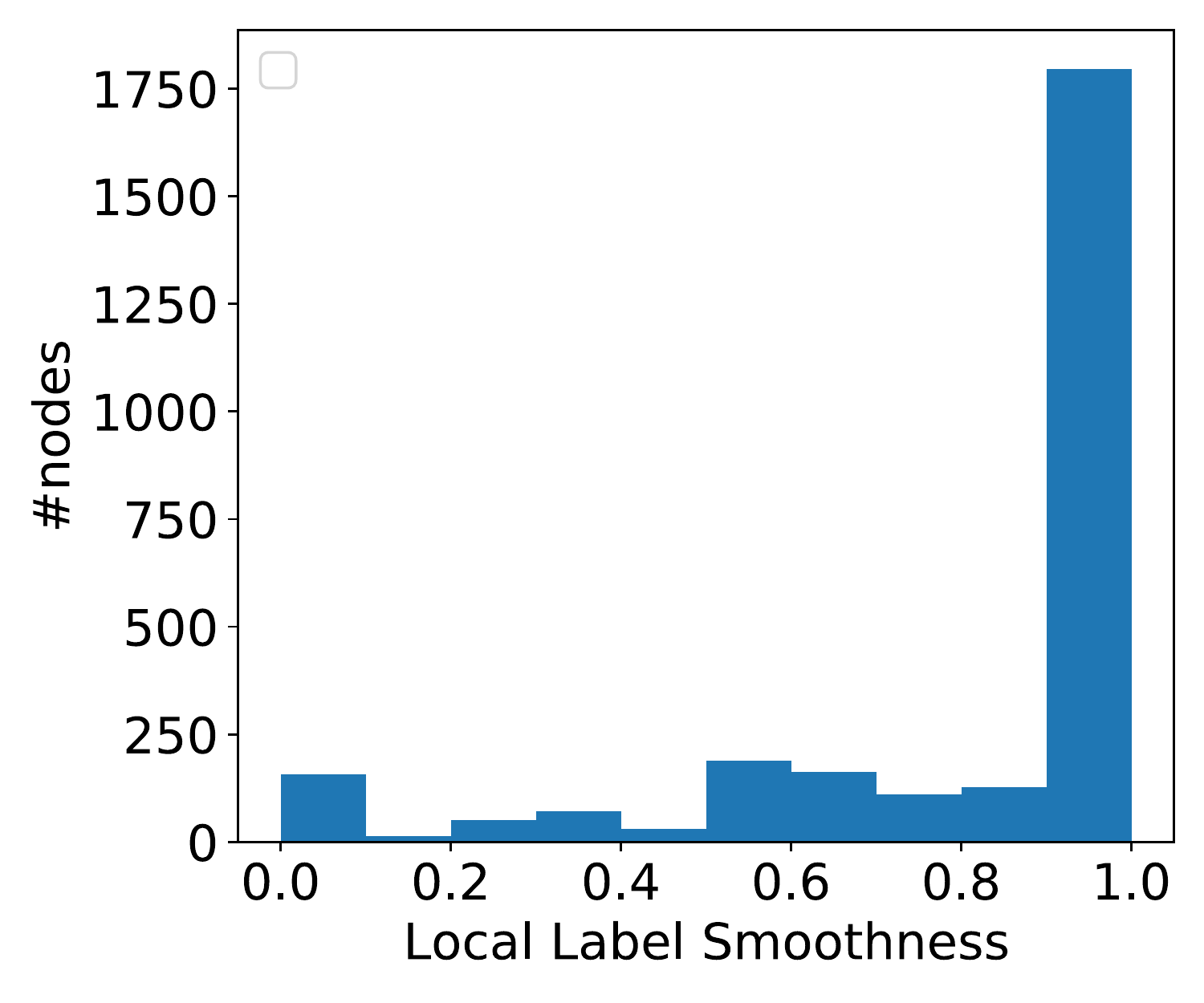} }}%
     \subfloat[\citeseer]{{\includegraphics[width=0.19\linewidth]{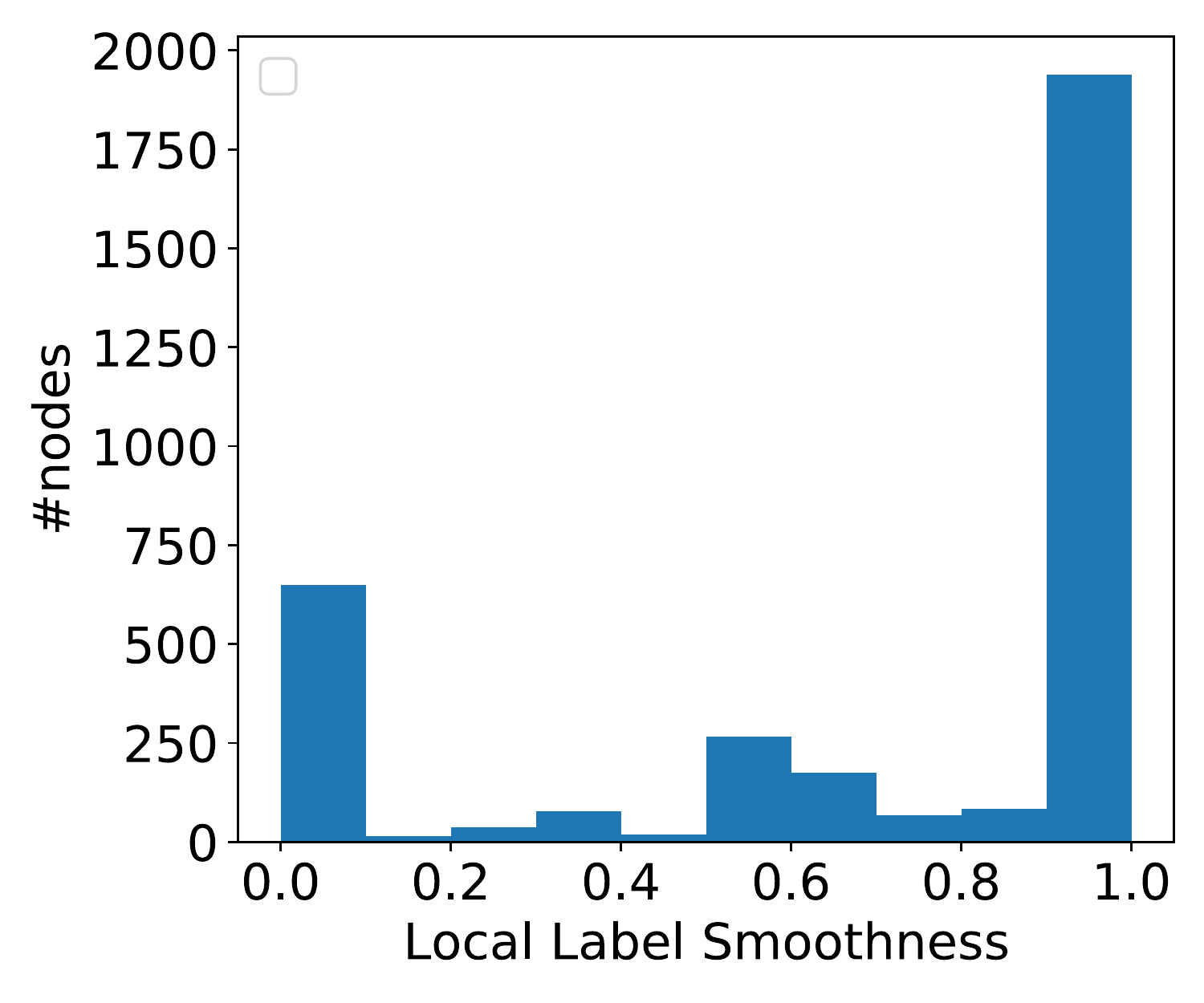} }}%
    \subfloat[\pubmed]{{\includegraphics[width=0.19\linewidth]{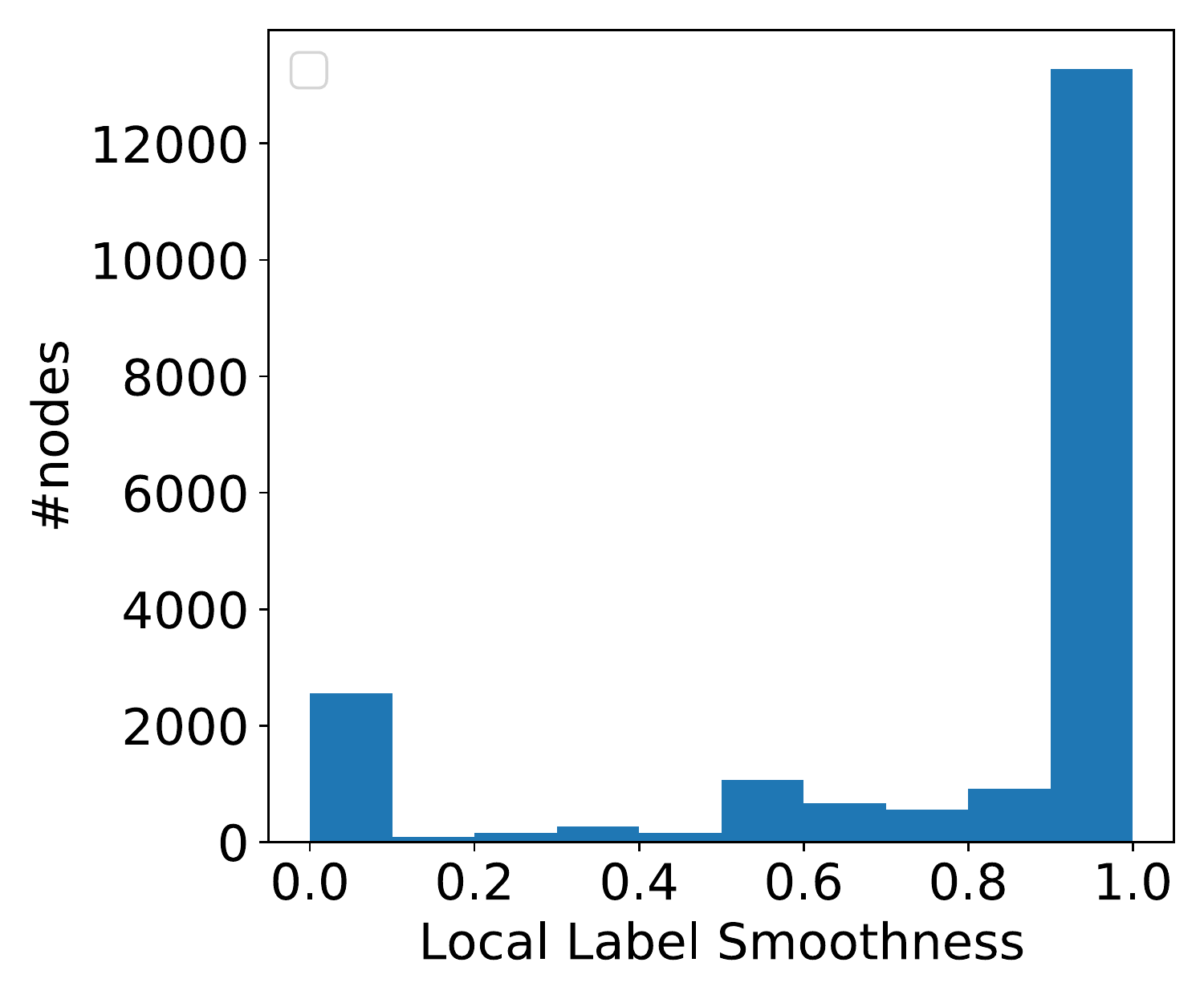} }}%
    \subfloat[\blogc]{{\includegraphics[width=0.19\linewidth]{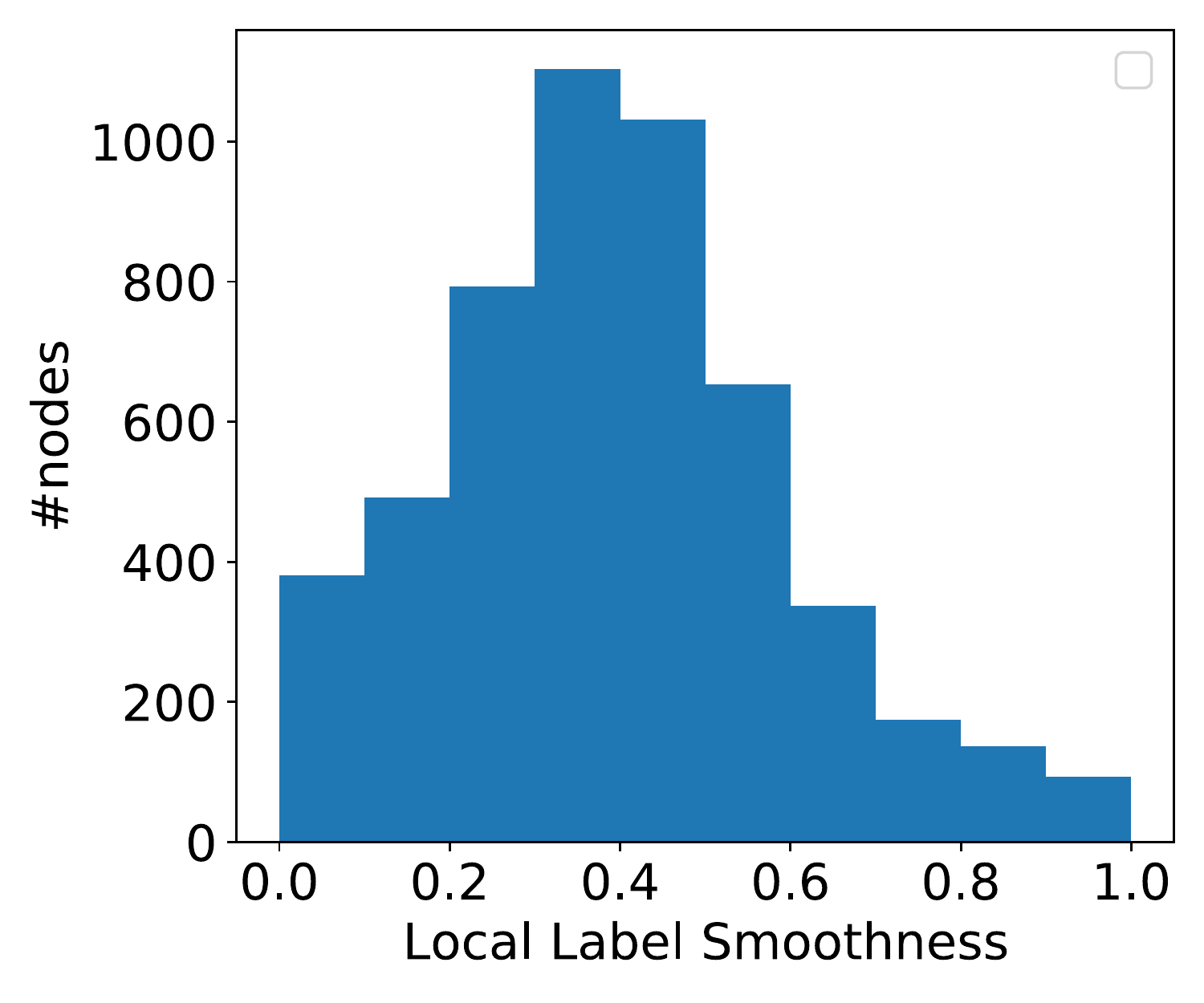} }}%
        \subfloat[\flickr]{{\includegraphics[width=0.19\linewidth]{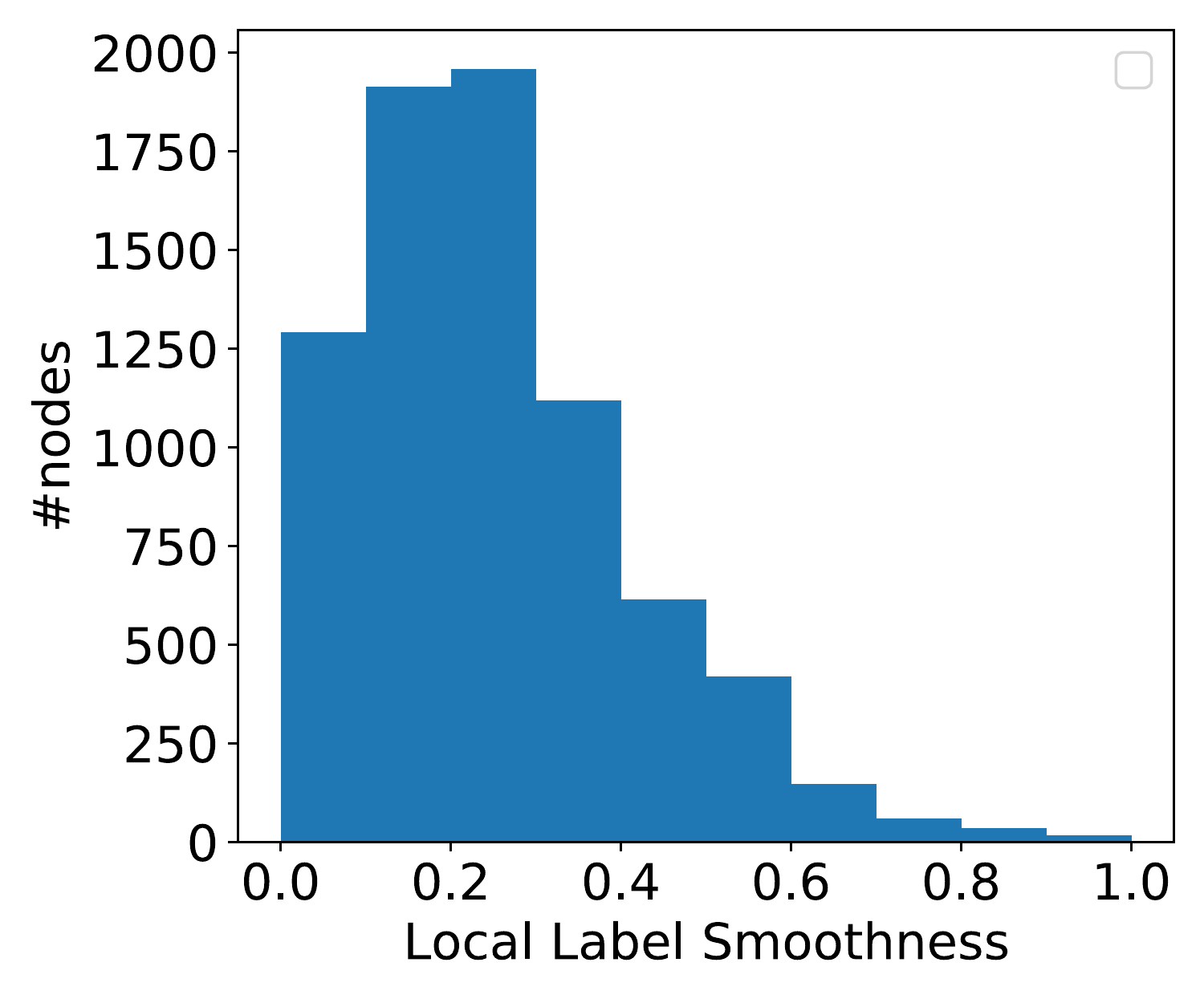} }\label{fig:flickr-alpha}}%

     \subfloat[\amazoncomp]{{\includegraphics[width=0.19\linewidth]{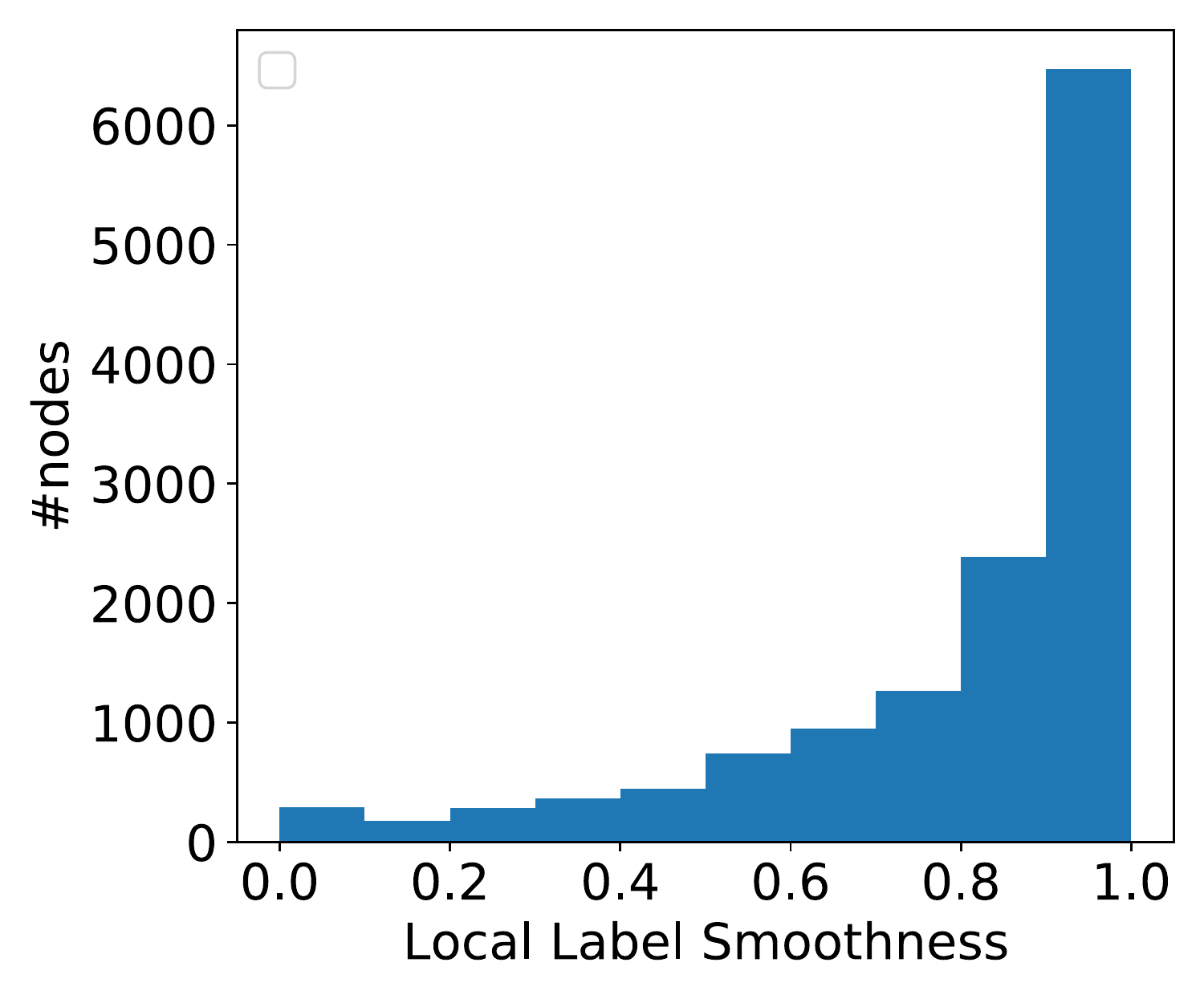} }}%
     \subfloat[\amazonphoto]{{\includegraphics[width=0.19\linewidth]{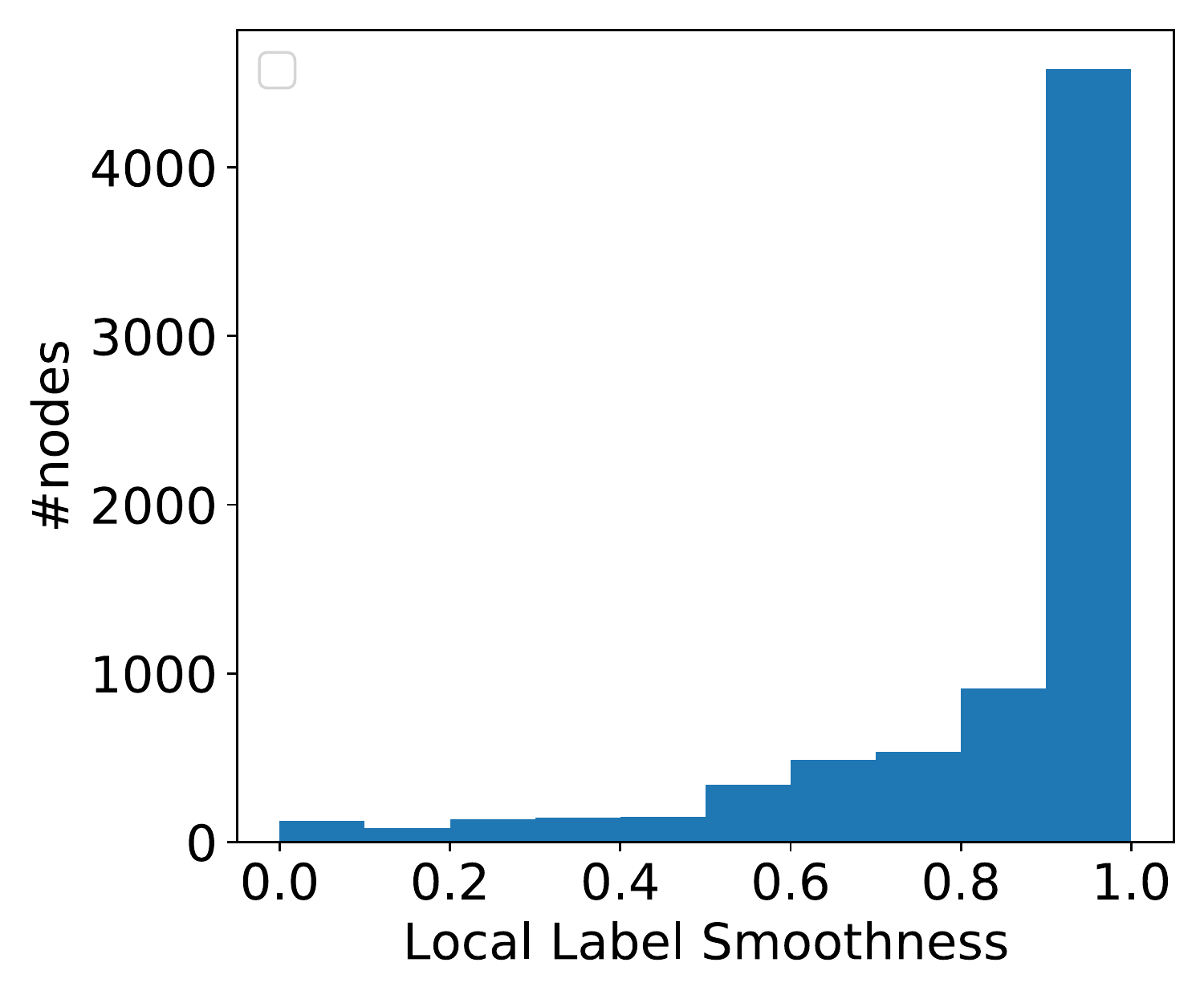} }}%
    \subfloat[\coauthorcs]{{\includegraphics[width=0.19\linewidth]{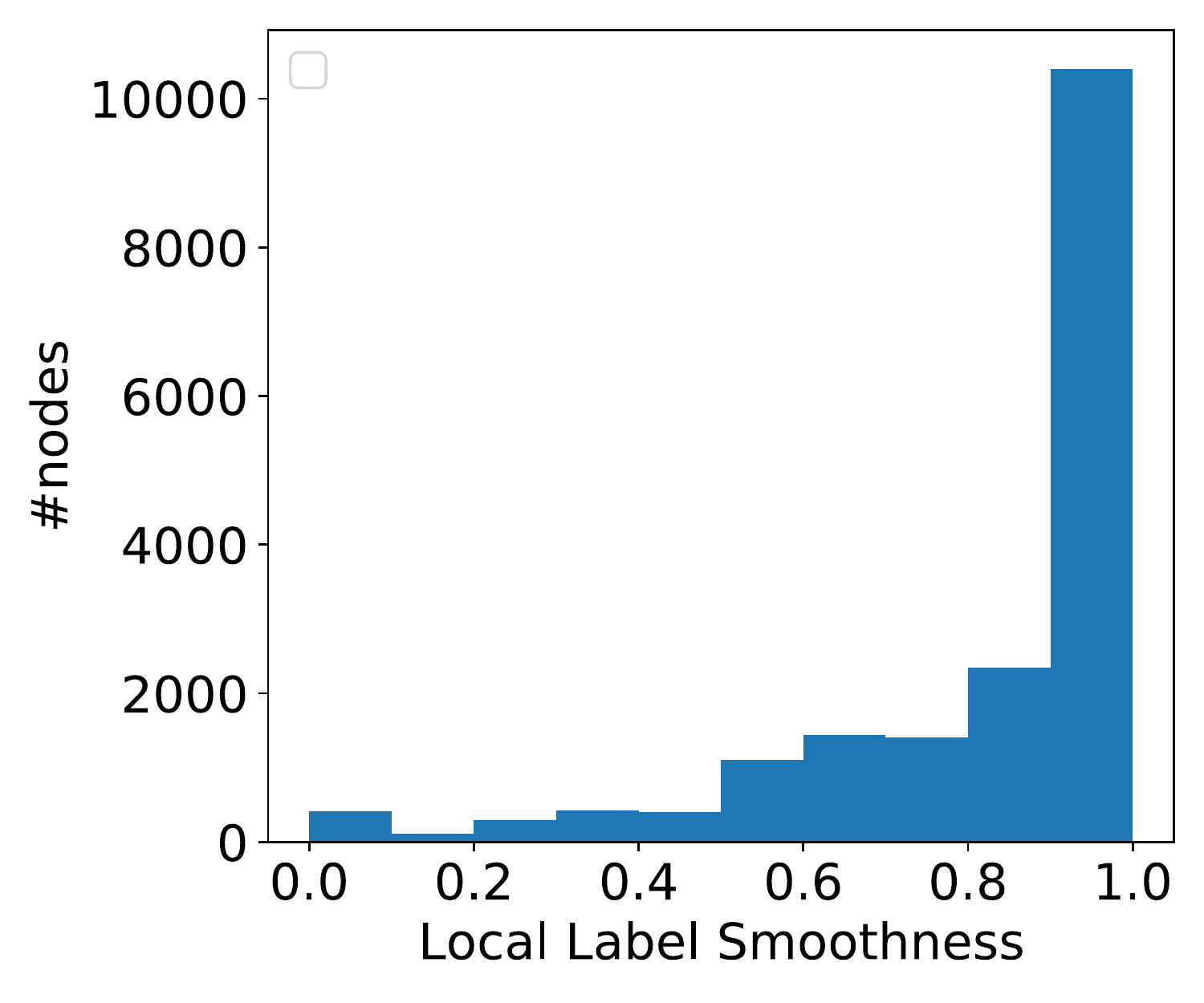} }}%
    \subfloat[\coauthorphys]{{\includegraphics[width=0.19\linewidth]{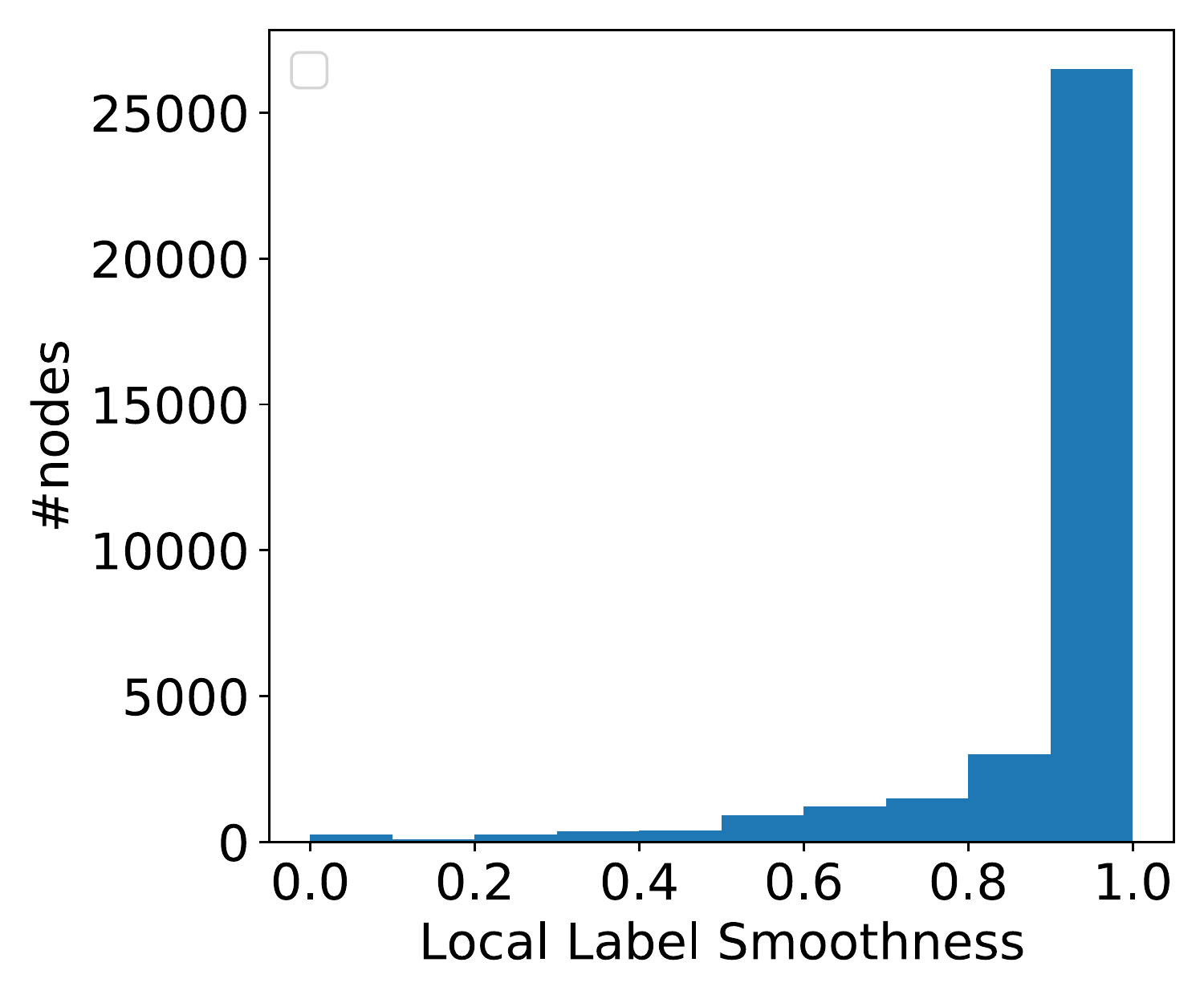} }}%
    \subfloat[\airusa]{{\includegraphics[width=0.19\linewidth]{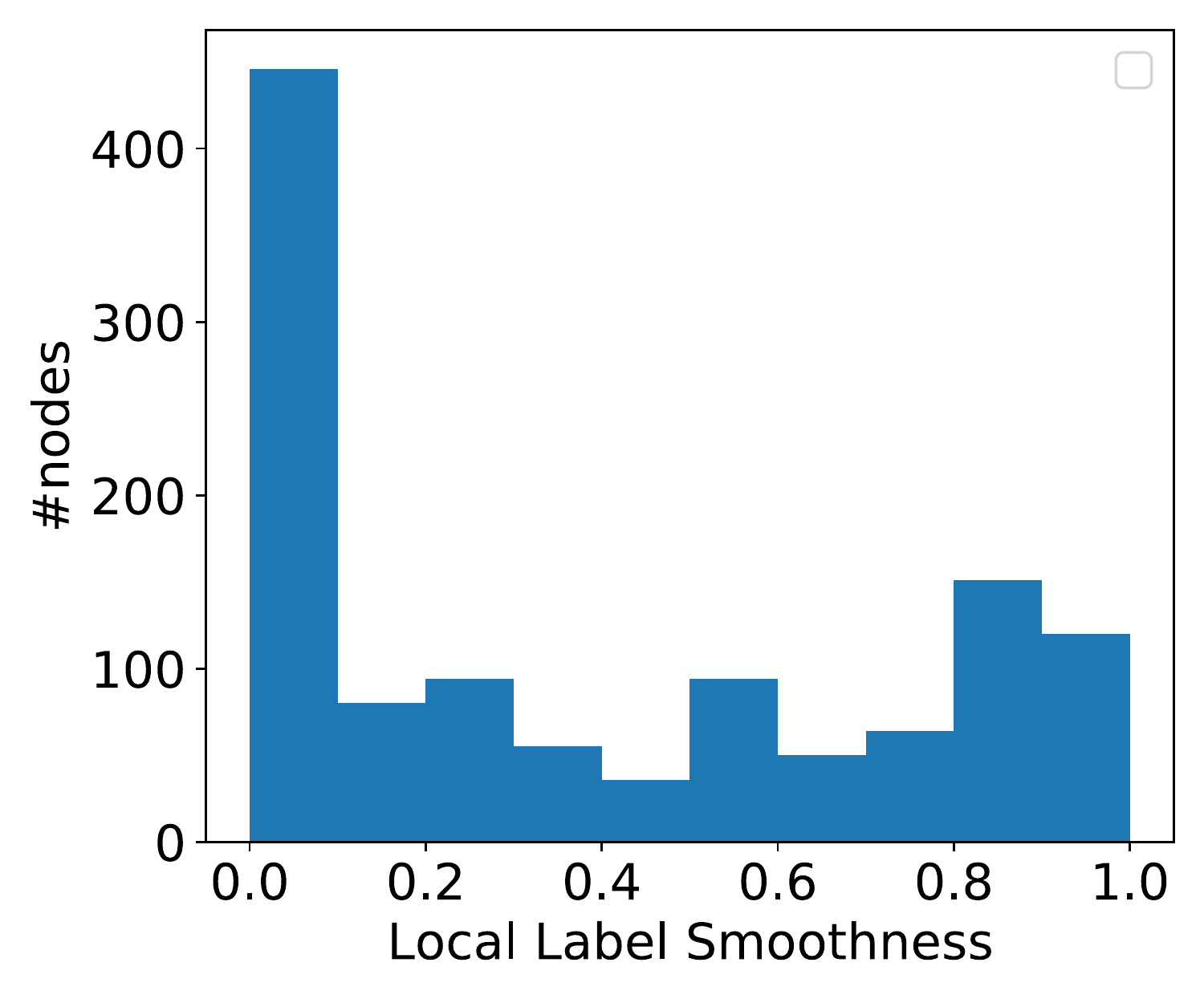} }}%
    \qquad
    % \vskip -0.25em
    \caption{Distribution of local label smoothness (homophily) on different graph datasets: note the non-homogeneity of smoothness values.} 
    \label{fig:localsmoothness}
% \vskip -0.25em
\end{figure*}
\subsection{Node Classification}\label{sec:node_classification}

%In this section, we conduct the node classification task. 
We first introduce the datasets and experimental settings in Section~\ref{sec:data_exp} and then present the results in~Section~\ref{sec:results}.
% \vspace{-0.1in}
\subsubsection{Datasets and Experimental Settings}\label{sec:data_exp}
% \vspace{0.1in}
\textbf{Datasets.}  We use $10$ datasets from various domains including citation, social, co-authorship and co-purchase networks. Specifically, we use three citation networks including \cora, \citeseer, and \pubmed~\citep{sen2008collective}; two social networks including \blogc and \flickr~\citep{huang2017label}; two co-authorship networks including \coauthorcs and \coauthorphys~\citep{shchur2018pitfalls}; one transportation network, \airusa~\citep{wu2019net}; and two co-purchase networks including \amazoncomp and Amazon Photos~\citep{shchur2018pitfalls}. We provided detailed description of these datasets as follows.
\begin{compactitem}
    \item {\bf Citation Networks:} \cora, \citeseer and \pubmed are widely adopted benchmarks of GNN models. In these graphs, nodes represent documents and edges denote the citation links between them. Each node is associated bag-of-words features of its corresponding document and also a label indicating the research field of the document.
    \item {\bf Co-purchase Graph:} \amazoncomp and \amazonphoto are co-purchase graphs, where nodes represent items and edges indicate that two items are frequently bought
together. Each item is associated with bag-of-words features extract from its corresponding reviews. The labels of items are given by the category of them.
\item {\bf Co-authorship Graphs:} \coauthorcs and \coauthorphys are co-authorship graphs, where nodes are authors and edges indicating the co-authorship between authors. Each author is associated with some features representing the keywords of his/her papers. The label of an author indicates the his/her most active research field.
    \item {\bf Blogcatalog:} \blogc is an online blogging community where bloggers can follow each other. The \blogc graph consists of blogger as nodes while their social relations as edges. Each blogger is associated with some features generated from key words of his/her blogs. The bloggers are labeled according to their interests.
    \item {\bf Flickr:} \flickr is an image sharing platform. The \flickr graph consists users as its nodes and the following relation among users as its edges. The users are labeled with the groups they joined. 
\item {\bf \airusa:} \airusa is a air traffic graph, where each node is an airport in the US. Two nodes are considered as connected if there existing commercial flights between them. Nodes are labeled with the the passenger flow of each airport. 
\end{compactitem}
Some statistics about these datasets can be found in Table~\ref{tab:statistics}. To provide a sense of the local smoothness properties of these datasets, in addition to the summary statistics, we also illustrate the \emph{local label smoothness} distributions in Figure~\ref{fig:localsmoothness}: here, we define the local label smoothness of a node as the ratio of nodes in its neighborhood that share the same label with it. Specifically, for a node $v_i$ we formally define the local label smoothness as follows  
\begin{align}
    \text{ls}(i) = \frac{\sum\limits_{j\in \mathcal{N}(i)} \mathbf{1}\{ l(i) =l(j)\}}{|{\mathcal{N}}(i)|}\label{eq:local_label_smooth}
\end{align}
where $l(v_i)$ denotes the label of node $v_i$ and $\mathbf{1}\{a\}$ is an indicator function, which takes $1$ as output only when $a$ is true, otherwise $0$. Notably, as shown in Figure~\ref{fig:localsmoothness}, the large variety in local label smoothness within several real-world datasets including \blogc, \flickr and \airusa -- also observed in \citep{shah2020scale,pei2020geom,zhu2020beyond,zhu2020graph,chien2020adaptive,ma2021homophily} -- clearly motivates the importance of the adaptive smoothness assumption in \methodadaptive. 

\begin{table}[t!]
\small
\centering
\begin{tabular}{ccccc}
\toprule
                 & \#Nodes & \#Edges & \#Labels & \#Features  \\ \midrule
\cora             & 2708    & 13264   & 7        & 1433          \\ 
\citeseer         & 3327    & 12431   & 6        & 3703         \\ 
\pubmed           & 19717   & 108365  & 3        & 500          \\ 
\amazoncomp & 13381   & 504937  & 10       & 767           \\ 
\amazonphoto    & 7487    & 245573  & 8        & 745           \\ 
\coauthorcs      & 18333   & 182121  & 15       & 6805          \\ 
\coauthorphys  & 34493   & 530417  & 5        & 8415        \\ 
\blogc      & 5196    & 348682  & 6        & 8189          \\ 
\flickr  & 7575   & 487051  & 9      & 12047        \\ 
\airusa & 1190&28388 &4&238\\
\bottomrule
\end{tabular}
\caption{Dataset summary statistics.}
\label{tab:statistics}
\end{table}

\textbf{Experimental Settings.} 
For the citation networks, we use the standard split as provided in~\citet{kipf2016semi,yang2016revisiting}. For \blogc, \flickr and \airusa, we adopt the split provided in~\citet{zhao2020data}. For the citation networks, social networks and transportation network, we report  results averaged across 30 random seeds. For co-authorship and co-purchase networks, we utilize $20$ labels per class for training, $30$ nodes per class for validation and the remaining nodes for test. This process is repeated $20$ times, which results in $20$ different training/validation/test splits. For each split, the experiment is repeated for $20$ times with different initialization. The average results over $20\times 20$ experiments are reported. We compare our methods with the methods introduced in Section~\ref{sec:graph_and_gnn} including GCN, GAT and APPNP (we do not include PPNP due to scaling difficulty given the matrix inverse in Eq.~\eqref{eq:fa_ppnp}). For all methods, we tune the hyperparameters from the following options: 1) learning rate: $\{0.005, 0.01,0.05\}$; 2) weight decay  $\{5{e-}04,5{e-}05,5{e-}06,5{e-}07,5{e-}08\}$; and 3) dropout rate: $\{0.2,0.5,0.8\}$. For APPNP and our method \methodadaptive, we further tune the number of iterations $K$ and the upper bound $s$ for $c_i$ in Eq.~\eqref{eq:cal_ci} from the following range: 1) $K$: $\{2, 5,10\}$; and $s$: $\{1,9,19,29\}$. Note that we treat APPNP as a special case of our proposed method with $\mathcal{C}_i=s$ in Eq.~\eqref{eq:cal_ci}.

% \vspace{-1em}
%\vspace{-0.1in}
\begin{table}[t!]
\small
\centering
\caption{Node classification accuracy across datasets.}
\begin{adjustbox}{width=0.48\textwidth}
\begin{tabular}{lcccl}
\toprule
Accuracy (\%) & GCN & GAT & APPNP & \methodadaptive \\ \midrule
\cora            & 81.75$\pm$0.8 & 82.56$\pm$0.8 & 84.49$\pm$0.6 & 84.79$\pm$0.7* \\ 
\citeseer        & 70.13$\pm$1.0 & 70.77$\pm$0.8 & 71.97$\pm$0.6 & 72.17$\pm$0.6 \\ 
\pubmed          & 78.56$\pm$0.5 & 78.88$\pm$0.5 & 80.00$\pm$0.4 & 80.52$\pm$0.6*** \\ 
\amazoncomp & 82.79$\pm$1.3 & 83.01$\pm$1.5 & 82.99$\pm$1.6 & 83.40$\pm$1.3***\\
\amazonphoto    & 89.60$\pm$1.5 & 90.33$\pm$1.2 & 91.38$\pm$1.2 & 91.44$\pm$1.2\\
\coauthorcs     & 91.55$\pm$0.6 & 90.95$\pm$0.7 & 91.69$\pm$0.4 &  92.33$\pm$0.5***\\
\coauthorphys& 93.23$\pm$0.7 &  92.86$\pm$0.7    & 93.84$\pm$0.5 &  93.92$\pm$0.6\\
\blogc     & 71.38$\pm$2.7 & 72.90$\pm$1.2 & 92.43$\pm$0.9 & 93.33$\pm$0.3***\\
\flickr & 63.28$\pm$ 0.3& 52.17$\pm$1.0&83.19$\pm$0.4&  84.15$\pm$0.4*** \\
\airusa  &56.62$\pm$ 1.1 & 55.81$\pm$ 1.7  &56.20$\pm$1.2 & 57.32$\pm$1.2***\\
\bottomrule
\end{tabular}
\end{adjustbox}
\\
\scriptsize{$*$, ${*}{*}{*}$ indicate the improvement over APPNP is significant at $p < 0.1$ and $0.005$}
\vspace{-0.2in}
\label{tab:results}
\end{table}
% \vspace{-0.2em}
\subsubsection{Performance Comparison}\label{sec:results}

We show results in Table~\ref{tab:results}, using two-sample $t$-test to evaluate significance.  We note a few main observations: 1) GAT outperforms GCN in most datasets, indicating that modeling adaptive local smoothness is generally helpful; 2) APPNP/\methodadaptive outperform GCN/GAT in most settings, suggesting iterative gradient descent offers advantages to single-step gradients, due to improved ability to achieve a denoising solution closer to the optimal; and 3) Most notably, our proposed \methodadaptive achieves consistently better performance than GCN/GAT, and outperforms or matches APPNP across datasets.  Outperformance of GAT suggests the importance of considering neighborhood information in learning local smoothness, while outperformance of APPNP suggests that adaptive local smoothness is advantaged versus fixed smoothness assumptions.

We note that APPNP is the closest contending method.  However, our reported averaged results consistently outperform it, and especially so on datasets where the the local label smoothness varies a lot across nodes, like \blogc, \flickr, and \airusa.  In addition to these datasets, \methodadaptive also achieves strongly significant improvements ($p < 0.005$) over APPNP on some datasets with lesser label smoothness diversity like \coauthorcs, \amazoncomp and \pubmed, and less significant improvements ($p < 0.1$) on \cora.  Comparatively, for datasets with extremely skewed local label smoothness distributions, where the majority of nodes have perfect, 1.0 label homophily (see Figure ~\ref{fig:localsmoothness}) like \amazonphoto, \coauthorphys, and \citeseer, improvement over APPNP is marginal.  APPNP shines in such cases, since its assumption of {\tiny$\sigma \left( h_1\left(h_2\left( \left\{ {\bf X}'_j | j\in \tilde{\mathcal{N}}(i)  \right\}\right) \right) \right)=1$} is ideal for these nodes (designating maximal local smoothness). Conversely, our model has the challenging task of learning $h_1(\cdot )$ and $h_2(\cdot)$ -- in such skewed cases, learning these functions may be relatively unfruitful, but still achieves strong performance. Overall, \methodadaptive can work well no matter whether the given graph has a skewed or diverse local label smoothness distribution, but especially shines when the local label smoothness is diverse.

\begin{table}[t!]
\small
\caption{Node classification accuracy, split across nodes with low/high local label smoothness.}
\begin{adjustbox}{width=0.48\textwidth}
\begin{tabular}{ccccc}
\hline
\multirow{2}{*}{Accuracy (\%)}     & \multicolumn{2}{c}{Low} & \multicolumn{2}{c}{High} \\ 
\cmidrule(lr){2-3} \cmidrule(lr){4-5}
           & APPNP     & ADA-UGNN     & APPNP      & ADA-UGNN     \\\bottomrule
\cora       & 38.40      & 40.17        & 90.65      & 90.76        \\
\citeseer   & 34.20      & 34.83        & 82.00       & 81.90         \\ 
\pubmed     & 41.08     & 44.44        & 88.32      & 88.33        \\ 
\amazoncomp & 44.08     & 45.81        & 88.08      & 88.31        \\ 
\amazonphoto  & 44.57     & 44.94        & 95.61      & 95.54        \\ 
\coauthorcs     & 43.94     & 51.79        & 96.31      & 96.26        \\ 
\coauthorphys      & 37.97     & 43.24        & 96.10      & 95.98        \\ 
\blogc       & 89.70      & 91.15        & 99.49      & 99.06        \\ 
\flickr     & 81.83     & 82.95        & 96.63      & 96.04        \\ 
\airusa     & 42.44     & 43.62        & 77.03      & 78.01        \\ \bottomrule
\end{tabular}
\end{adjustbox}
\label{tab:lh_acc}
\vspace{-0.2in}
\end{table}

\begin{figure*}[t!]%
%\vskip -0.2em
     \centering
     \subfloat[0\%]{{\includegraphics[width=0.22\linewidth]{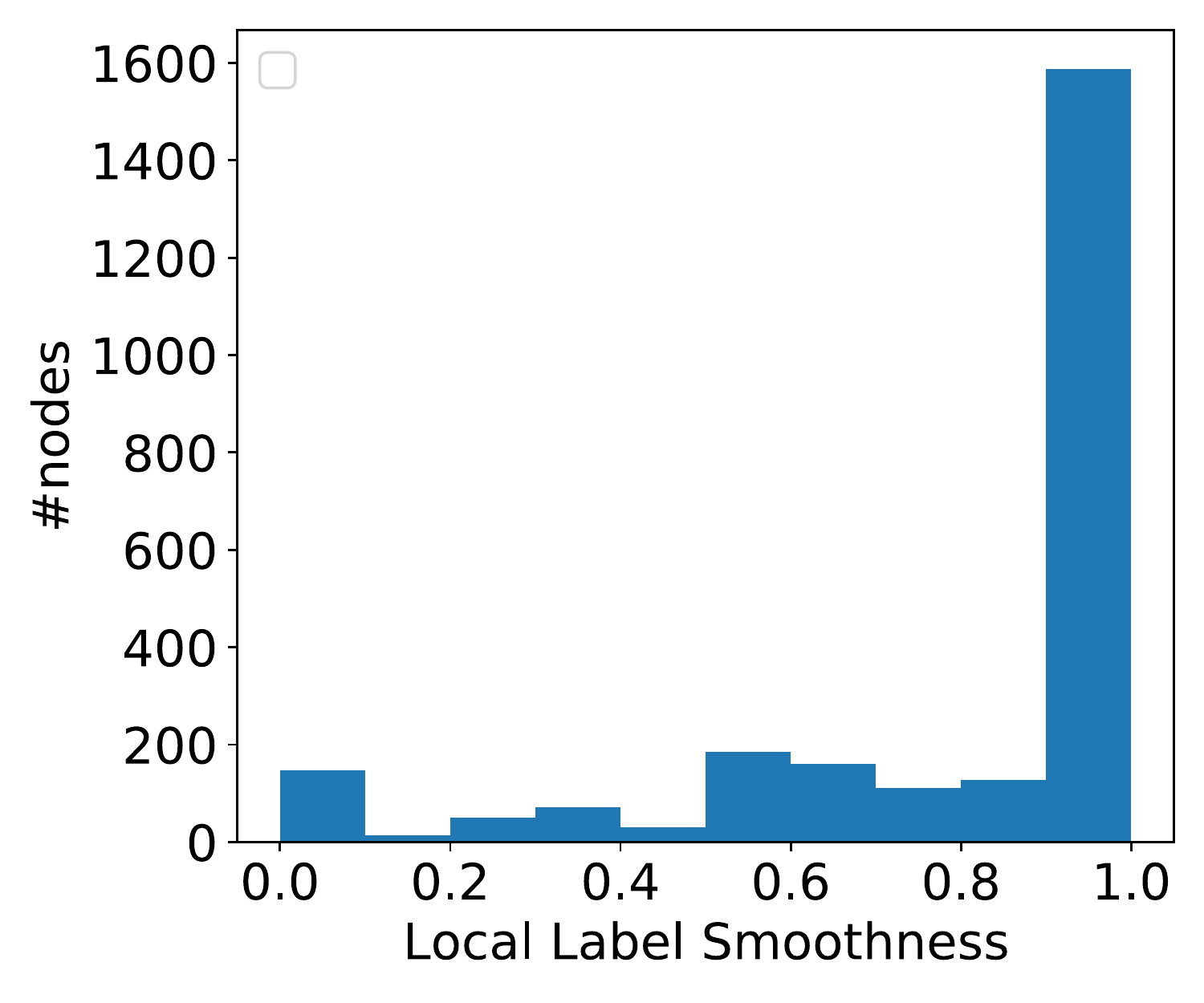}}}%
     \subfloat[5\%]{{\includegraphics[width=0.22\linewidth]{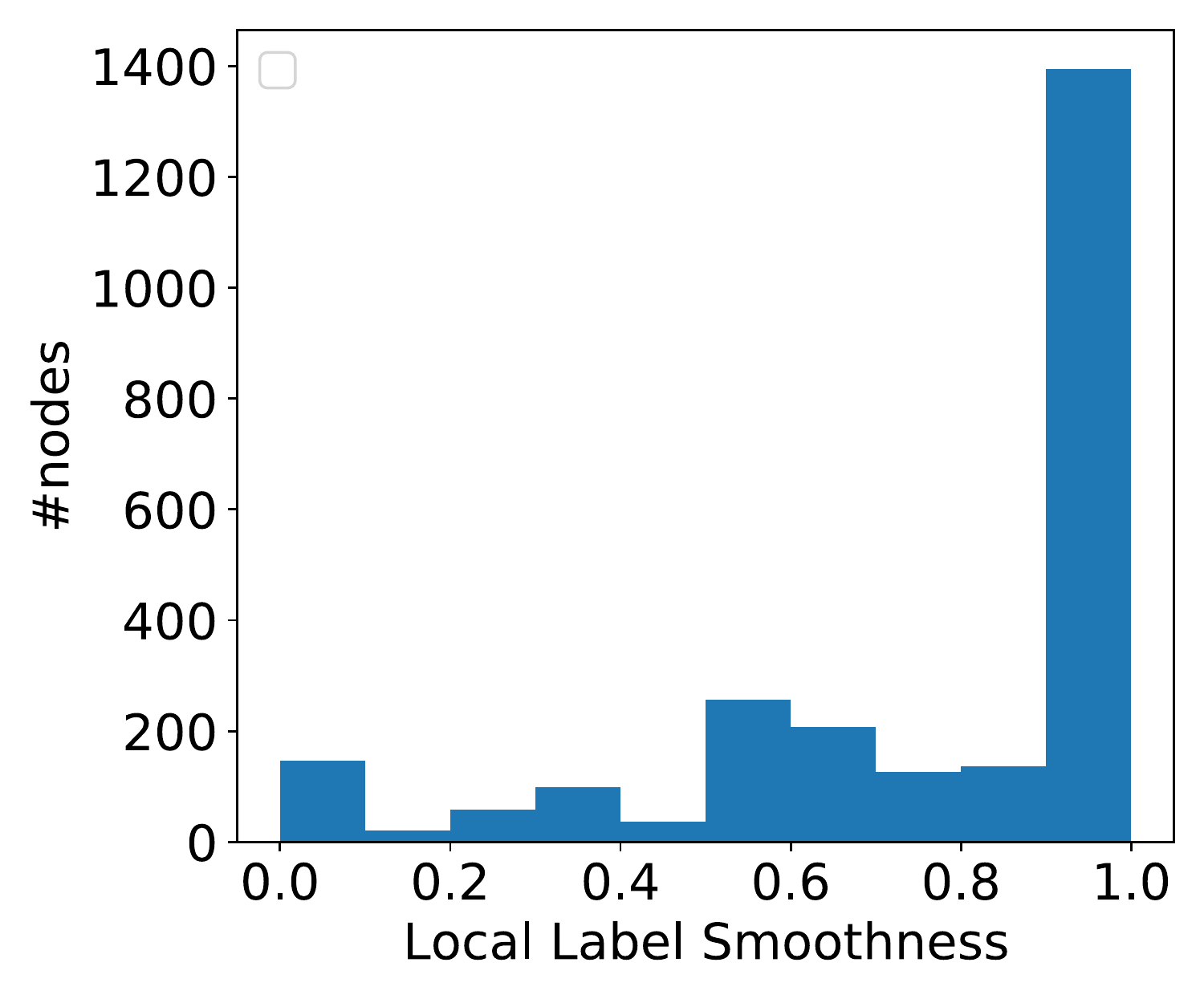} }}%
    \subfloat[15\%]{{\includegraphics[width=0.22\linewidth]{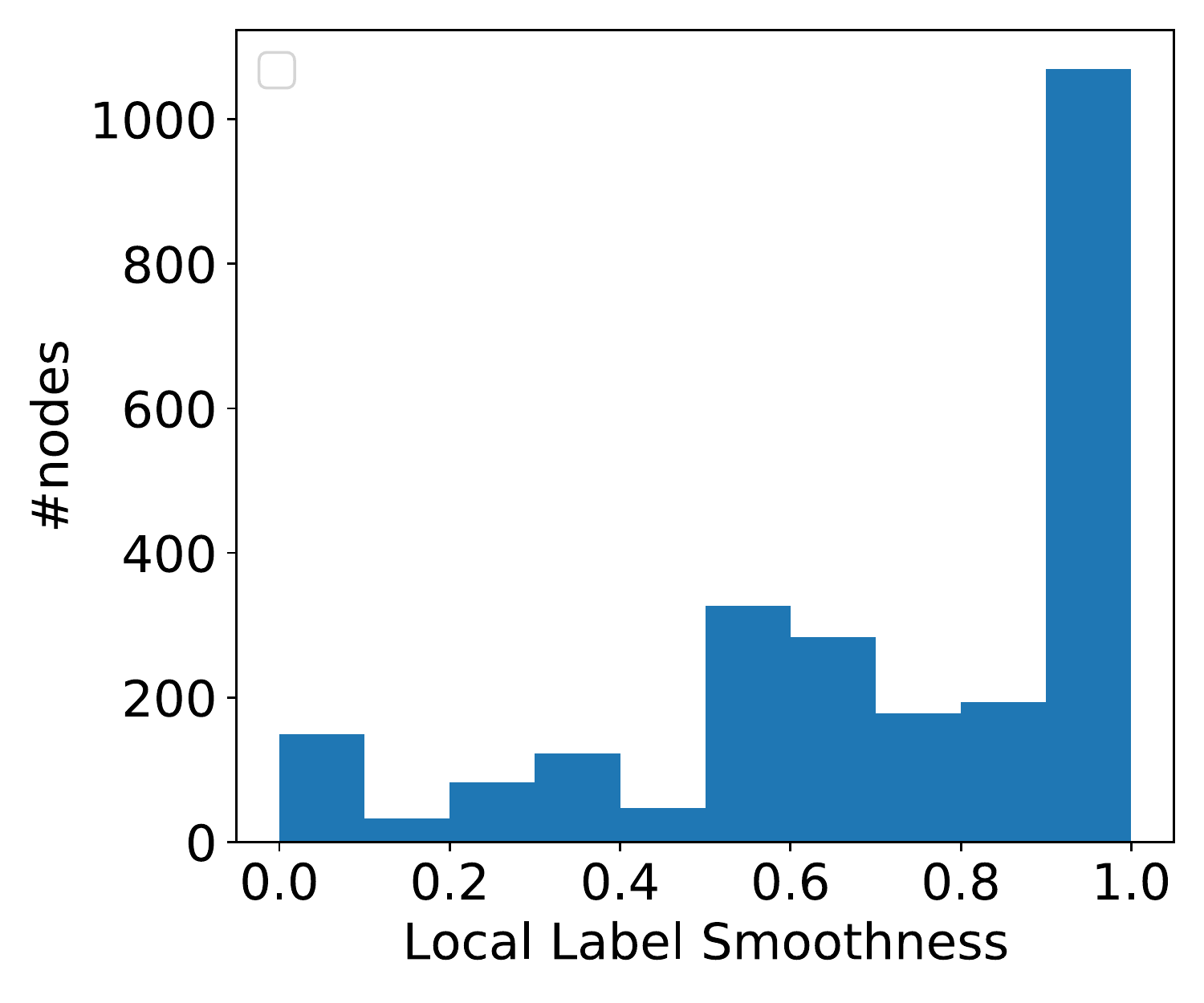} }}%
    \subfloat[25\%]{{\includegraphics[width=0.22\linewidth]{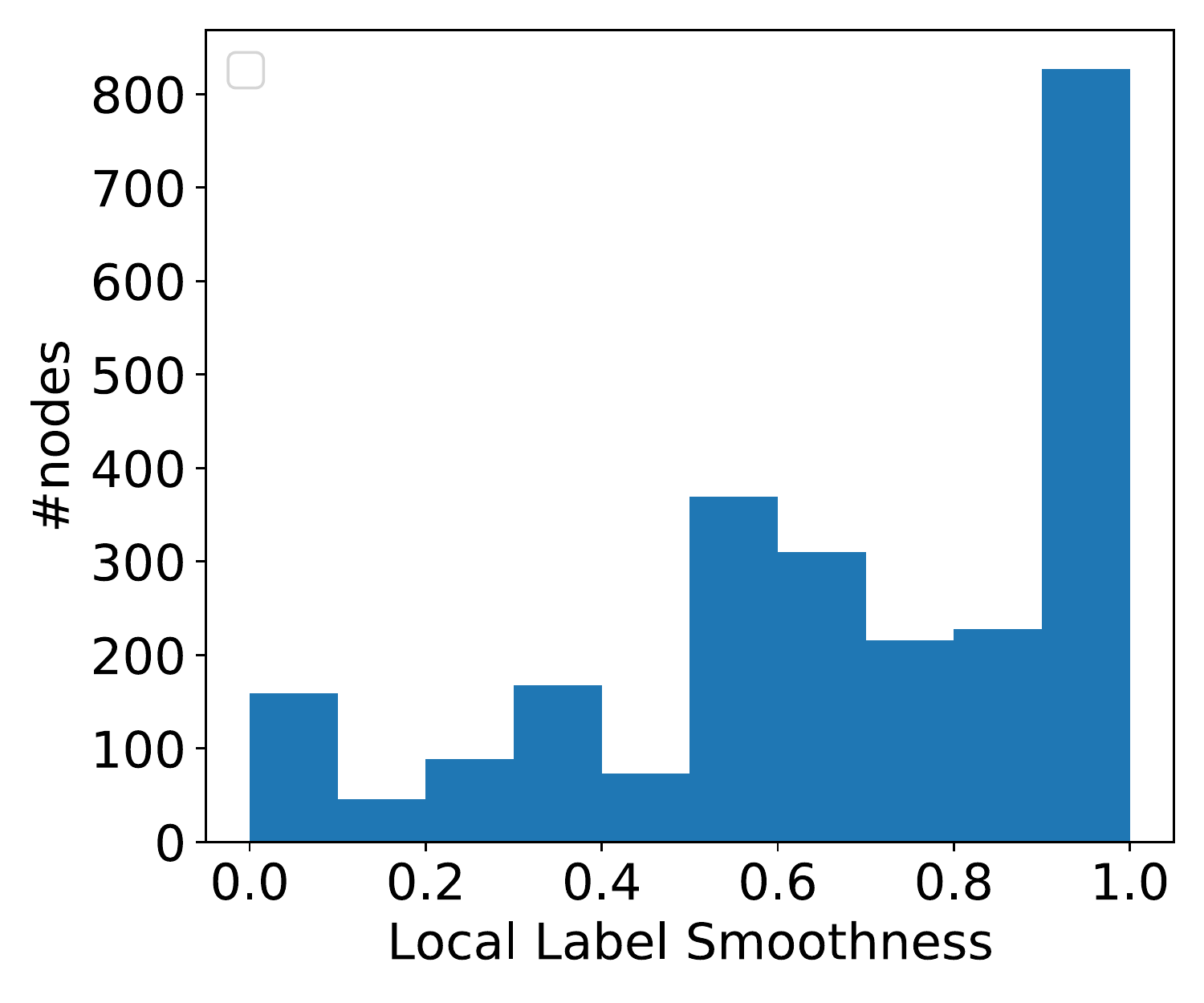} }}%
    \qquad
    \caption{Distribution of local label smoothness on \cora with various attack perturbation rates.} 
    \label{fig:cora-localsmoothness}
\end{figure*}

\begin{figure*}[ht!]%
%\vskip -0.2em
     \centering
     \subfloat[\cora]{{\includegraphics[width=0.325\linewidth]{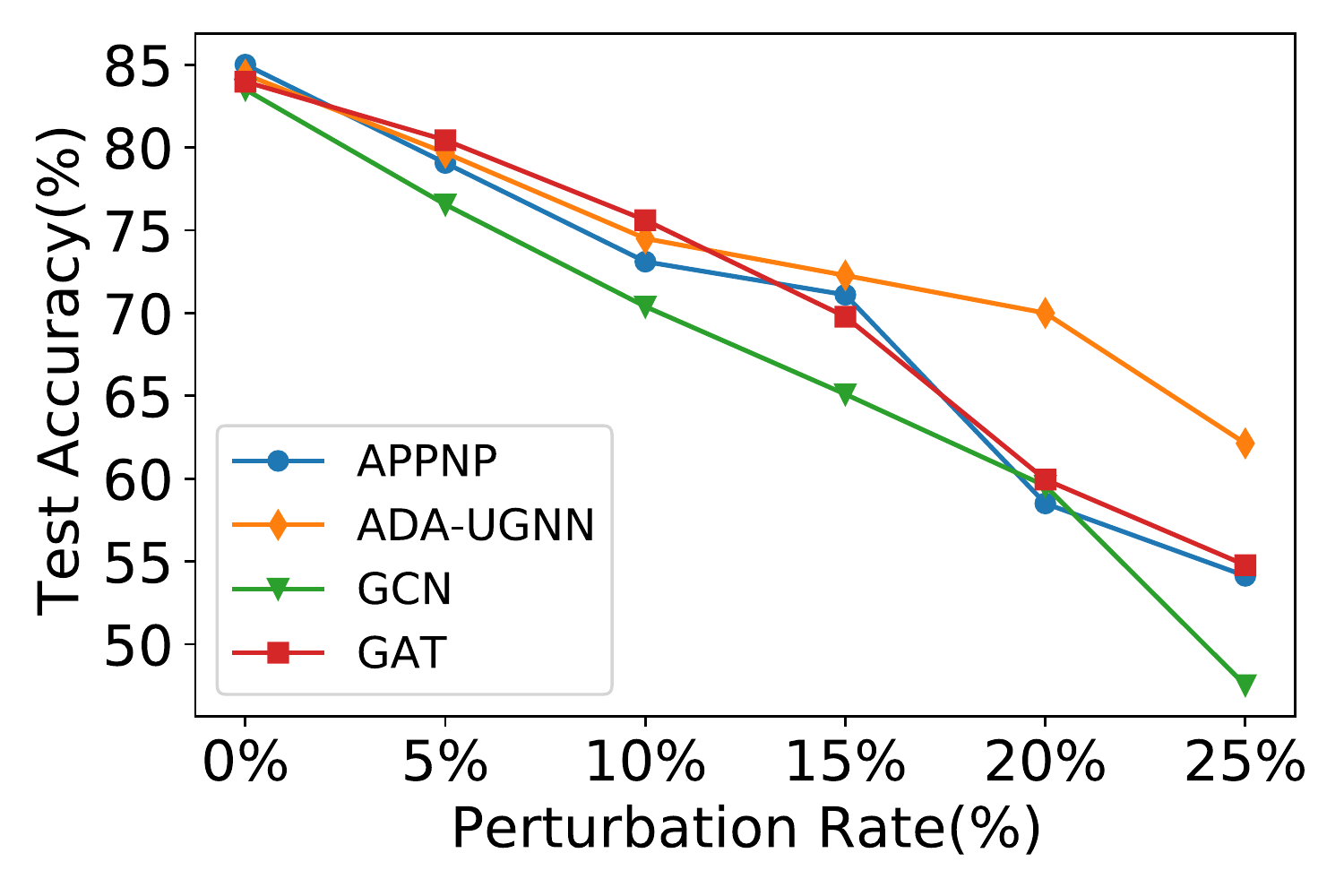} }}%
     \subfloat[\citeseer]{{\includegraphics[width=0.325\linewidth]{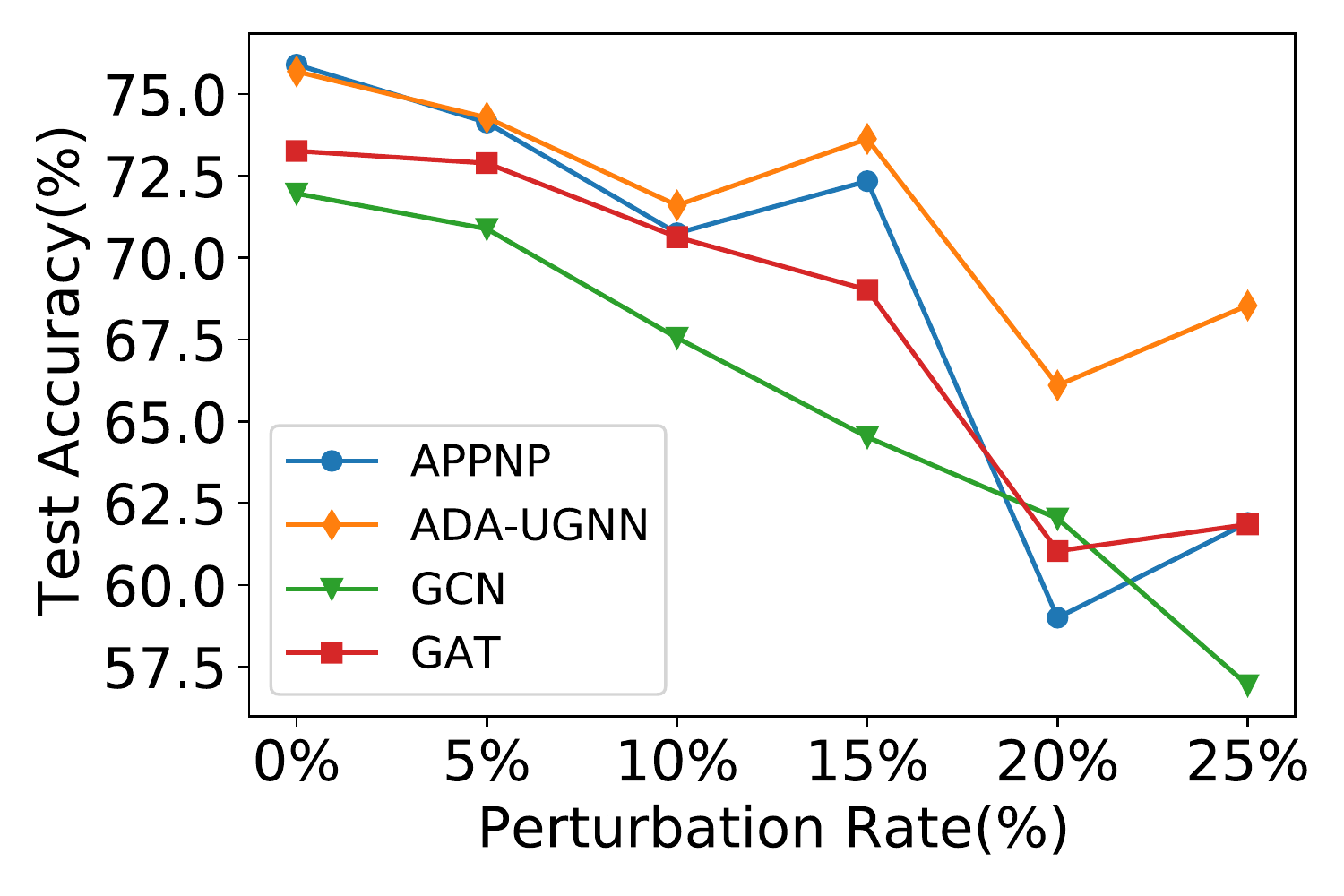} }}%
    \subfloat[\pubmed]{{\includegraphics[width=0.325\linewidth]{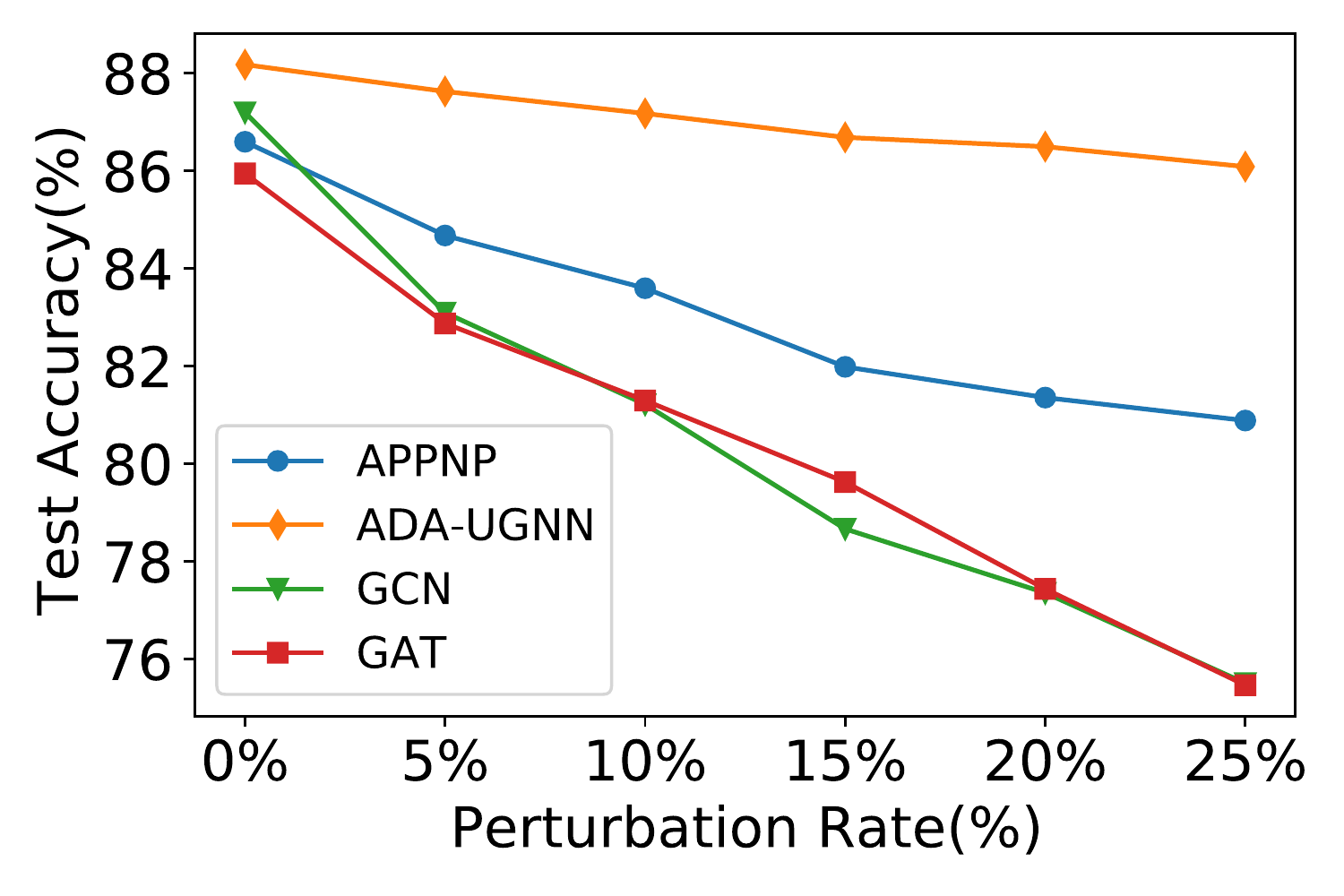} }}%
    \qquad
    % \vskip -0.25em
    \caption{Node classification accuracy under adversarial attacks. The proposed \methodadaptive maintains consistently strong performance even under high attack scale via its adaptive smoothness assumptions, where other methods degrade more quickly.} 
    \label{fig:robustness}
\vspace{-0.1in}
\end{figure*}

\subsubsection{Performance vs. Local Label Smoothness}

To further investigate how \methodadaptive works, we partition the nodes in the test set of each dataset into two groups: (1) \emph{high smoothness:} those with local label smoothness ${>}0.5$, and (2) \emph{low smoothness:} those with ${\leq}0.5$, and evaluate accuracy for APPNP and the proposed \methodadaptive for each group. The results for all datasets are shown in Table~\ref{tab:lh_acc}. 
Clearly, the proposed \methodadaptive consistently improves the performance for low-smoothness nodes in most datasets, while keeping comparable performance for high-smoothness nodes. Hence, for graphs where many nodes have low-level smoothness (like \blogc, \flickr or \airusa), \methodadaptive outperforms APPNP significantly in terms of overall performance. However, for graphs with very few low-smoothness nodes such as \coauthorphys, though \methodadaptive still significantly improves the performance over APPNP for those low smoothness nodes, the overall performance is similar to APPNP.

% \vspace{-0.1in}
\subsection{Robustness Under Adversarial Attacks}\label{sec:adv_attack}

\begin{figure}[!h]%
\centering
\includegraphics[width=1\linewidth]{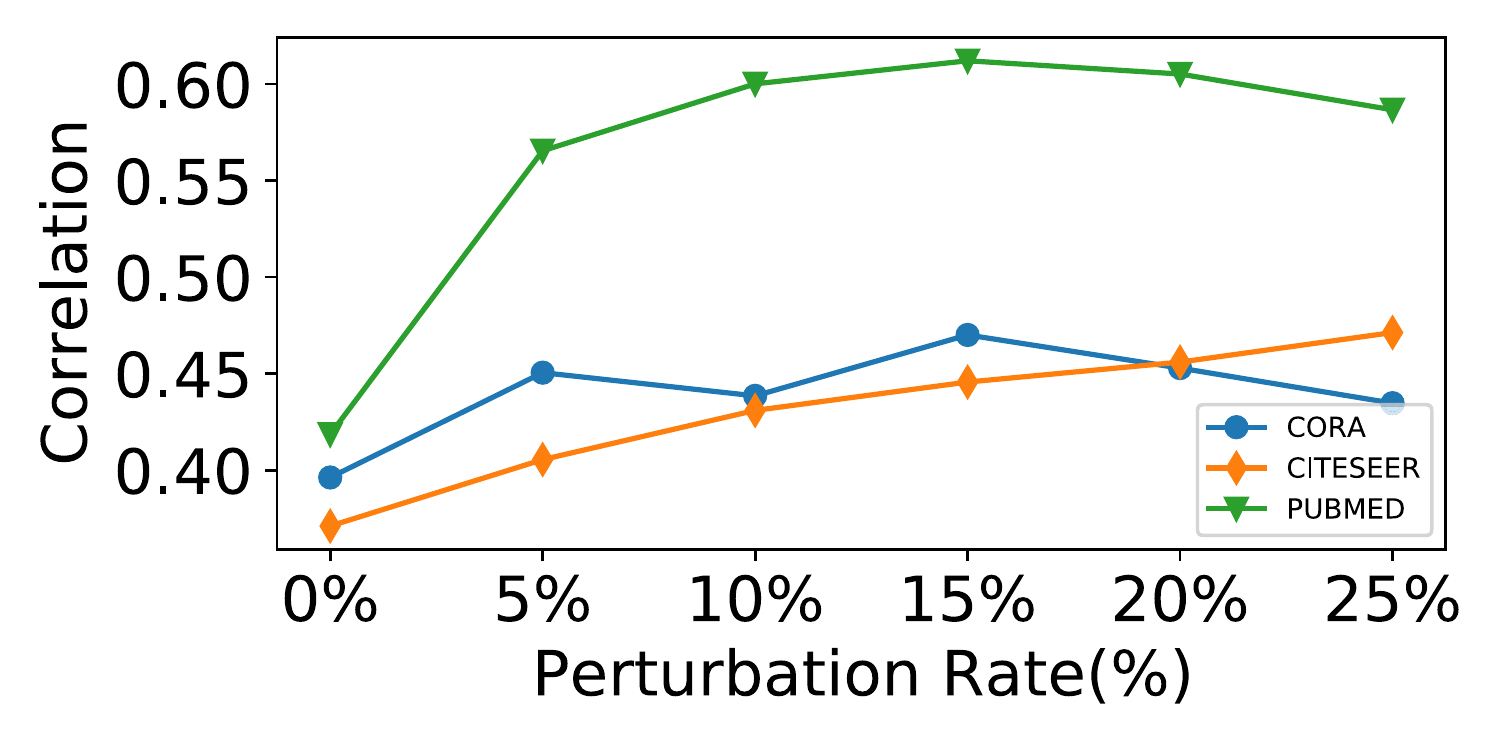}
% \vspace{-1em}
\caption{Correlation between \methodadaptive's learned $\mathcal{C}_i$ scores and local label smoothness.}
\label{fig:c_corrrelation}
% \vspace{-1.4em}
\end{figure}

% \vspace{-0.1in}
Adversarial attacks on graphs tend to connect nodes from different classes and remove edges between nodes from the same class \citep{wu2019adversarial,jin2020graph}, producing graphs with varying local label smoothness after attack. To further demonstrate that \methodadaptive can handle graphs with varying local label smoothness better than alternatives, we conduct experiments to show its robustness under adversarial attacks. Specifically, we adopt Mettack~\citep{zugner2019adversarial} to perform the attacks. We utilize the attacked graphs (5\%-25\% perturb rate) from~\citet{jin2020graph} and follow the same setting, i.e. report average performance of each method over $10$ random seeds.
These attacked graphs are generated from \cora, \citeseer and \pubmed. As per prior work, we use only the largest connected component in each graph, and fix a $10/10/80$ training, validation and test split. Hence, the results in this section are not comparable with those in the previous section. We present the local smoothness distributions of the graphs generated by Mettack \citep{zugner2019adversarial} with different perturbation rate for \cora in Figure~\ref{fig:cora-localsmoothness}. The change in local smoothness distributions for \citeseer and \pubmed dataset are similar to \cora. We compare \methodadaptive with GCN, GAT and APPNP. Results under varying perturbation rates (attack intensities) are shown in Figure~\ref{fig:robustness}, with \methodadaptive in orange.  We have the following observations: 1) \methodadaptive is more stable than all three baselines, with the most graceful performance degradation under attack; and 2) \methodadaptive (orange) substantially outperforms APPNP by a large margin, especially in scenarios with high perturbation rate. 

These results further demonstrate that \methodadaptive can handle graphs with varying local label smoothness better than alternatives. Note that compared to the next-best contender (APPNP), \methodadaptive only introduces a constant number of additional parameters for modeling $h_1(\cdot)$ in Eq.~\eqref{eq:cal_ci}. Although \methodadaptive is not specifically designed to defend against adversarial attacks (and we do not claim it is the most suitable).

% \vspace{-0.1in}
\subsubsection{Learning Smoothness under Attack}
 We investigate how \methodadaptive learns adaptive $\mathcal{C}_i$ under different attack perturbation ratios.
Ideally, for nodes with high local label smoothness, we expect the learned $\mathcal{C}_i$ to be larger, such that a higher-level local smoothness is enforced to this node during model training.
We consider the Pearson correlation between the learned $\mathcal{C}_i$ for all nodes with their local label smoothness (unknown during training). The correlation coefficients for the three datasets under various perturbation ratios are shown in Figure~\ref{fig:c_corrrelation}. In general, the learned $\mathcal{C}_i$ are strongly positively correlated with the local label smoothness under all settings on all three datasets.  
% , demonstrating that \methodadaptive can indeed usefully learn to leverage this information.
Moreover, compared with the clean graph ($0\%$ perturbation), the correlation scores are generally higher when the graphs are increasingly perturbed. This is likely because all the three datasets have highly skewed local label smoothness distributions as discussed in Section~\ref{sec:results}. Under perturbation, the label smoothness distributions of these three datasets become much more diverse (see Figure~\ref{fig:cora-localsmoothness} for a demonstration), which facilitates \methodadaptive to learn better $\mathcal{C}_i$. These findings are consistent with our original conjecture in Section~\ref{sec:results}. This also partially explains why \methodadaptive strongly outperforms APPNP under the attack setting on these datasets, compared to marginal outperformance under the clean graph setting.

% \vspace{-0.11in}
\section{Related Works}\label{sec:related_works}
% \vspace{-0.1in}
There are mainly two streams of work in designing GNN models, i.e, spectral-based and spatial-based. When designing spectral-based GNNs, graph convolution~\citep{shuman2013emerging}, defined based on spectral theory, is utilized to design GNN layers together with the feature transformation~\citep{bruna2013spectral,henaff2015deep,defferrard2016convolutional}. These spectral-based graph convolutions are tightly related with graph signal processing, and they can be regarded as graph filters. Low-pass graph filters can usually be adopted to denoise graph signals~\citep{chen2014signal}. In fact, most algorithms discussed in our work can be regarded as low-pass graph filters. With the emergence of GCN~\citep{kipf2016semi}, which can be regarded as a spectral-based and also a spatial-based graph convolution operator, numerous spatial-based GNN models have since been developed~\citep{hamilton2017inductive,velivckovic2017graph,monti2017geometric,gao2018large,gilmer2017neural}. A more comprehensive introduction on GNNs can be found at~\cite{ma2021deep}.  

Graph signal denoising aims to infer a clean graph signal given a noisy one, and can be usually formulated as a graph regularized optimization problem~\citep{chen2014signal}. Recently, several works connect GCN with graph signal denoising with Laplacian regularization~\citep{nt2019revisiting,zhao2019pairnorm}, finding the aggregation process in GCN models can be regarded as a first-order approximation of the optimal solution.  On the other hand, GNNs are also utilized to develop novel algorithms for graph denoising~\citep{chen2020graph}.  Unlike these works, our paper details how a family of GNN models can be unified with a graph signal denoising perspective, and shows  its promise for new architecture design. 

We noticed that one concurrent work very recently released to arXiv \cite{zhu2021interpreting}, which finds optimization commonalities between some GNN models. We approach our unified framework with signal denoising, which provides a better explanation of the framework and offers a new perspective. Furthermore, our observation of the adaptive local smoothness allows us to unify GAT into our framework, and propose a new GNN model \methodadaptive. Also, there is another concurrent work~\cite{fu2020understanding} connecting GNNs with graph signal denoising problem. Compared with it, our work connects diverse other models including GAT, PPNP, APPNP, DropEdge and Pairnorm with graph signal denosing problem, via UGNN's \textit{regularization-focused} paradigm and provides a novel connection to GAT from a local label smoothness angle. 

% Many GNN variants have been developed in the past decade, following the initial idea of convolution based on spectral graph theory~\citep{bruna2013spectral}. 

% Many spectral GNNs have since been developed and improved~\citep{defferrard2016convolutional,kipf2016semi,henaff2015deep,levie2018cayleynets}. 
% Graph Convolutional Network (GCN)~\citep{kipf2016semi} is one of the most representative spectral GNN that targets the semi-supervised node classification task. 

% As spectral GNNs generally operate (expensively) on the full adjacency, spatial-based methods~\citep{hamilton2017inductive,velivckovic2017graph,monti2017geometric,gao2018large,niepert2016learning} which perform graph convolution with neighborhood aggregation became prominent, owing to their scalability and flexibility~\citep{ying2018graph}. \citet{hamilton2017inductive} and \citet{ying2018graph} proposed mini-batch algorithm for spatial-based GNNs that further improved the scalability. Several works propose more advanced architectures which add residual connections to facilitate deep GNN training \citep{xu2018representation, li2019deepgcns}.

% \vspace{-0.1in}
\section{Conclusion}
% \vspace{-0.1in}
%In this paper, we establish the connections between various GNN models including GCN, PPNP, APPNP and GAT and graph denoising problems. Specifically, the aggregation operations in these models can be regarded as exactly or approximately addressing a graph signal denoising problem with Laplacian regularization.  With these observations, we propose a general framework, \method, which enables the design of new GNN models from the denoising perspective via regularizer design.  As an example, we instantiate the \method framework with a regularizer addressing adaptive local smoothness across nodes, a property prevalent in several real-world graphs, and proposed and evaluated a suitable new GNN model, \methodadaptive.  

In this paper, we show how various representative GNN models including GCN, PPNP, APPNP and GAT can be unified mathematically as natural instances of graph denoising problems. Specifically, the aggregation operations in these models can be regarded as exactly or approximately addressing such denoising problems. With these observations, we propose a general framework, \method, which enables the development of novel and flexible GNN models from the denoising perspective via regularizer design. As an example demonstrating the promise of this paradigm, we instantiate the \method framework with a regularizer addressing adaptive local smoothness across nodes, and proposed and evaluated a suitable new GNN model, \methodadaptive.

\section*{Acknowledgements}
This research is supported by the National Science Foundation (NSF) under grant numbers IIS1714741, CNS1815636, IIS1845081, IIS1907704, DRL2025244, IIS1928278, IIS1955285, IOS2107215,\\ IOS2035472, Army Research Office (ARO) under grant number W911NF-21-1-0198, and a grant from Snap Inc.

\balance
\bibliography{ugnn}

%%% -*-BibTeX-*-
%%% Do NOT edit. File created by BibTeX with style
%%% ACM-Reference-Format-Journals [18-Jan-2012].

\begin{thebibliography}{39}

%%% ====================================================================
%%% NOTE TO THE USER: you can override these defaults by providing
%%% customized versions of any of these macros before the \bibliography
%%% command.  Each of them MUST provide its own final punctuation,
%%% except for \shownote{}, \showDOI{}, and \showURL{}.  The latter two
%%% do not use final punctuation, in order to avoid confusing it with
%%% the Web address.
%%%
%%% To suppress output of a particular field, define its macro to expand
%%% to an empty string, or better, \unskip, like this:
%%%
%%% \newcommand{\showDOI}[1]{\unskip}   % LaTeX syntax
%%%
%%% \def \showDOI #1{\unskip}           % plain TeX syntax
%%%
%%% ====================================================================

\ifx \showCODEN    \undefined \def \showCODEN     #1{\unskip}     \fi
\ifx \showDOI      \undefined \def \showDOI       #1{#1}\fi
\ifx \showISBNx    \undefined \def \showISBNx     #1{\unskip}     \fi
\ifx \showISBNxiii \undefined \def \showISBNxiii  #1{\unskip}     \fi
\ifx \showISSN     \undefined \def \showISSN      #1{\unskip}     \fi
\ifx \showLCCN     \undefined \def \showLCCN      #1{\unskip}     \fi
\ifx \shownote     \undefined \def \shownote      #1{#1}          \fi
\ifx \showarticletitle \undefined \def \showarticletitle #1{#1}   \fi
\ifx \showURL      \undefined \def \showURL       {\relax}        \fi
% The following commands are used for tagged output and should be
% invisible to TeX
\providecommand\bibfield[2]{#2}
\providecommand\bibinfo[2]{#2}
\providecommand\natexlab[1]{#1}
\providecommand\showeprint[2][]{arXiv:#2}

\bibitem[\protect\citeauthoryear{Bruna, Zaremba, Szlam, and LeCun}{Bruna
  et~al\mbox{.}}{2013}]%
        {bruna2013spectral}
\bibfield{author}{\bibinfo{person}{Joan Bruna}, \bibinfo{person}{Wojciech
  Zaremba}, \bibinfo{person}{Arthur Szlam}, {and} \bibinfo{person}{Yann
  LeCun}.} \bibinfo{year}{2013}\natexlab{}.
\newblock \showarticletitle{Spectral networks and locally connected networks on
  graphs}.
\newblock \bibinfo{journal}{\emph{arXiv preprint arXiv:1312.6203}}
  (\bibinfo{year}{2013}).
\newblock


\bibitem[\protect\citeauthoryear{Chen, Eldar, and Zhao}{Chen
  et~al\mbox{.}}{2020}]%
        {chen2020graph}
\bibfield{author}{\bibinfo{person}{Siheng Chen}, \bibinfo{person}{Yonina~C
  Eldar}, {and} \bibinfo{person}{Lingxiao Zhao}.}
  \bibinfo{year}{2020}\natexlab{}.
\newblock \showarticletitle{Graph Unrolling Networks: Interpretable Neural
  Networks for Graph Signal Denoising}.
\newblock \bibinfo{journal}{\emph{arXiv preprint arXiv:2006.01301}}
  (\bibinfo{year}{2020}).
\newblock


\bibitem[\protect\citeauthoryear{Chen, Sandryhaila, Moura, and Kovacevic}{Chen
  et~al\mbox{.}}{2014}]%
        {chen2014signal}
\bibfield{author}{\bibinfo{person}{Siheng Chen}, \bibinfo{person}{Aliaksei
  Sandryhaila}, \bibinfo{person}{Jos{\'e}~MF Moura}, {and}
  \bibinfo{person}{Jelena Kovacevic}.} \bibinfo{year}{2014}\natexlab{}.
\newblock \showarticletitle{Signal denoising on graphs via graph filtering}. In
  \bibinfo{booktitle}{\emph{2014 IEEE Global Conference on Signal and
  Information Processing (GlobalSIP)}}. IEEE, \bibinfo{pages}{872--876}.
\newblock


\bibitem[\protect\citeauthoryear{Chien, Peng, Li, and Milenkovic}{Chien
  et~al\mbox{.}}{2020}]%
        {chien2020adaptive}
\bibfield{author}{\bibinfo{person}{Eli Chien}, \bibinfo{person}{Jianhao Peng},
  \bibinfo{person}{Pan Li}, {and} \bibinfo{person}{Olgica Milenkovic}.}
  \bibinfo{year}{2020}\natexlab{}.
\newblock \showarticletitle{Adaptive universal generalized pagerank graph
  neural network}.
\newblock \bibinfo{journal}{\emph{arXiv preprint arXiv:2006.07988}}
  (\bibinfo{year}{2020}).
\newblock


\bibitem[\protect\citeauthoryear{Corso, Cavalleri, Beaini, Li{\`o}, and
  Veli{\v{c}}kovi{\'c}}{Corso et~al\mbox{.}}{2020}]%
        {corso2020principal}
\bibfield{author}{\bibinfo{person}{Gabriele Corso}, \bibinfo{person}{Luca
  Cavalleri}, \bibinfo{person}{Dominique Beaini}, \bibinfo{person}{Pietro
  Li{\`o}}, {and} \bibinfo{person}{Petar Veli{\v{c}}kovi{\'c}}.}
  \bibinfo{year}{2020}\natexlab{}.
\newblock \showarticletitle{Principal neighbourhood aggregation for graph
  nets}.
\newblock \bibinfo{journal}{\emph{arXiv preprint arXiv:2004.05718}}
  (\bibinfo{year}{2020}).
\newblock


\bibitem[\protect\citeauthoryear{Defferrard, Bresson, and
  Vandergheynst}{Defferrard et~al\mbox{.}}{2016}]%
        {defferrard2016convolutional}
\bibfield{author}{\bibinfo{person}{Micha{\"e}l Defferrard},
  \bibinfo{person}{Xavier Bresson}, {and} \bibinfo{person}{Pierre
  Vandergheynst}.} \bibinfo{year}{2016}\natexlab{}.
\newblock \showarticletitle{Convolutional neural networks on graphs with fast
  localized spectral filtering}. In \bibinfo{booktitle}{\emph{Advances in
  neural information processing systems}}. \bibinfo{pages}{3844--3852}.
\newblock


\bibitem[\protect\citeauthoryear{Fu, Hou, Zhang, Ma, Kamhoua, and Cheng}{Fu
  et~al\mbox{.}}{2020}]%
        {fu2020understanding}
\bibfield{author}{\bibinfo{person}{Guoji Fu}, \bibinfo{person}{Yifan Hou},
  \bibinfo{person}{Jian Zhang}, \bibinfo{person}{Kaili Ma},
  \bibinfo{person}{Barakeel~Fanseu Kamhoua}, {and} \bibinfo{person}{James
  Cheng}.} \bibinfo{year}{2020}\natexlab{}.
\newblock \showarticletitle{Understanding graph neural networks from graph
  signal denoising perspectives}.
\newblock \bibinfo{journal}{\emph{arXiv preprint arXiv:2006.04386}}
  (\bibinfo{year}{2020}).
\newblock


\bibitem[\protect\citeauthoryear{Gao, Wang, and Ji}{Gao et~al\mbox{.}}{2018}]%
        {gao2018large}
\bibfield{author}{\bibinfo{person}{Hongyang Gao}, \bibinfo{person}{Zhengyang
  Wang}, {and} \bibinfo{person}{Shuiwang Ji}.} \bibinfo{year}{2018}\natexlab{}.
\newblock \showarticletitle{Large-scale learnable graph convolutional
  networks}. In \bibinfo{booktitle}{\emph{Proceedings of the 24th ACM SIGKDD}}.
  \bibinfo{pages}{1416--1424}.
\newblock


\bibitem[\protect\citeauthoryear{Gilmer, Schoenholz, Riley, Vinyals, and
  Dahl}{Gilmer et~al\mbox{.}}{2017}]%
        {gilmer2017neural}
\bibfield{author}{\bibinfo{person}{Justin Gilmer}, \bibinfo{person}{Samuel~S
  Schoenholz}, \bibinfo{person}{Patrick~F Riley}, \bibinfo{person}{Oriol
  Vinyals}, {and} \bibinfo{person}{George~E Dahl}.}
  \bibinfo{year}{2017}\natexlab{}.
\newblock \showarticletitle{Neural message passing for quantum chemistry}.
\newblock \bibinfo{journal}{\emph{arXiv preprint arXiv:1704.01212}}
  (\bibinfo{year}{2017}).
\newblock


\bibitem[\protect\citeauthoryear{Hamilton, Ying, and Leskovec}{Hamilton
  et~al\mbox{.}}{2017}]%
        {hamilton2017inductive}
\bibfield{author}{\bibinfo{person}{Will Hamilton}, \bibinfo{person}{Zhitao
  Ying}, {and} \bibinfo{person}{Jure Leskovec}.}
  \bibinfo{year}{2017}\natexlab{}.
\newblock \showarticletitle{Inductive representation learning on large graphs}.
  In \bibinfo{booktitle}{\emph{NeurIPS}}. \bibinfo{pages}{1024--1034}.
\newblock


\bibitem[\protect\citeauthoryear{Henaff, Bruna, and LeCun}{Henaff
  et~al\mbox{.}}{2015}]%
        {henaff2015deep}
\bibfield{author}{\bibinfo{person}{Mikael Henaff}, \bibinfo{person}{Joan
  Bruna}, {and} \bibinfo{person}{Yann LeCun}.} \bibinfo{year}{2015}\natexlab{}.
\newblock \showarticletitle{Deep convolutional networks on graph-structured
  data}.
\newblock \bibinfo{journal}{\emph{arXiv preprint arXiv:1506.05163}}
  (\bibinfo{year}{2015}).
\newblock


\bibitem[\protect\citeauthoryear{Huang, Li, and Hu}{Huang
  et~al\mbox{.}}{2017}]%
        {huang2017label}
\bibfield{author}{\bibinfo{person}{Xiao Huang}, \bibinfo{person}{Jundong Li},
  {and} \bibinfo{person}{Xia Hu}.} \bibinfo{year}{2017}\natexlab{}.
\newblock \showarticletitle{Label informed attributed network embedding}. In
  \bibinfo{booktitle}{\emph{Proceedings of the Tenth ACM International
  Conference on Web Search and Data Mining}}. \bibinfo{pages}{731--739}.
\newblock


\bibitem[\protect\citeauthoryear{Jin, Ma, Liu, Tang, Wang, and Tang}{Jin
  et~al\mbox{.}}{2020}]%
        {jin2020graph}
\bibfield{author}{\bibinfo{person}{Wei Jin}, \bibinfo{person}{Yao Ma},
  \bibinfo{person}{Xiaorui Liu}, \bibinfo{person}{Xianfeng Tang},
  \bibinfo{person}{Suhang Wang}, {and} \bibinfo{person}{Jiliang Tang}.}
  \bibinfo{year}{2020}\natexlab{}.
\newblock \showarticletitle{Graph Structure Learning for Robust Graph Neural
  Networks}.
\newblock \bibinfo{journal}{\emph{arXiv preprint arXiv:2005.10203}}
  (\bibinfo{year}{2020}).
\newblock


\bibitem[\protect\citeauthoryear{Kipf and Welling}{Kipf and Welling}{2016}]%
        {kipf2016semi}
\bibfield{author}{\bibinfo{person}{Thomas~N Kipf} {and} \bibinfo{person}{Max
  Welling}.} \bibinfo{year}{2016}\natexlab{}.
\newblock \showarticletitle{Semi-supervised classification with graph
  convolutional networks}.
\newblock \bibinfo{journal}{\emph{arXiv preprint arXiv:1609.02907}}
  (\bibinfo{year}{2016}).
\newblock


\bibitem[\protect\citeauthoryear{Klicpera, Bojchevski, and
  G{\"u}nnemann}{Klicpera et~al\mbox{.}}{2018}]%
        {klicpera2018predict}
\bibfield{author}{\bibinfo{person}{Johannes Klicpera},
  \bibinfo{person}{Aleksandar Bojchevski}, {and} \bibinfo{person}{Stephan
  G{\"u}nnemann}.} \bibinfo{year}{2018}\natexlab{}.
\newblock \showarticletitle{Predict then propagate: Graph neural networks meet
  personalized pagerank}.
\newblock \bibinfo{journal}{\emph{arXiv preprint arXiv:1810.05997}}
  (\bibinfo{year}{2018}).
\newblock


\bibitem[\protect\citeauthoryear{Ma, Liu, Shah, and Tang}{Ma
  et~al\mbox{.}}{2021}]%
        {ma2021homophily}
\bibfield{author}{\bibinfo{person}{Yao Ma}, \bibinfo{person}{Xiaorui Liu},
  \bibinfo{person}{Neil Shah}, {and} \bibinfo{person}{Jiliang Tang}.}
  \bibinfo{year}{2021}\natexlab{}.
\newblock \showarticletitle{Is Homophily a Necessity for Graph Neural
  Networks?}
\newblock \bibinfo{journal}{\emph{arXiv preprint arXiv:2106.06134}}
  (\bibinfo{year}{2021}).
\newblock


\bibitem[\protect\citeauthoryear{Ma and Tang}{Ma and Tang}{2021}]%
        {ma2021deep}
\bibfield{author}{\bibinfo{person}{Yao Ma} {and} \bibinfo{person}{Jiliang
  Tang}.} \bibinfo{year}{2021}\natexlab{}.
\newblock \bibinfo{booktitle}{\emph{Deep learning on graphs}}.
\newblock \bibinfo{publisher}{Cambridge University Press}.
\newblock


\bibitem[\protect\citeauthoryear{Monti, Boscaini, Masci, Rodola, Svoboda, and
  Bronstein}{Monti et~al\mbox{.}}{2017}]%
        {monti2017geometric}
\bibfield{author}{\bibinfo{person}{Federico Monti}, \bibinfo{person}{Davide
  Boscaini}, \bibinfo{person}{Jonathan Masci}, \bibinfo{person}{Emanuele
  Rodola}, \bibinfo{person}{Jan Svoboda}, {and} \bibinfo{person}{Michael~M
  Bronstein}.} \bibinfo{year}{2017}\natexlab{}.
\newblock \showarticletitle{Geometric deep learning on graphs and manifolds
  using mixture model cnns}. In \bibinfo{booktitle}{\emph{Proceedings of the
  IEEE Conference on CVPR}}. \bibinfo{pages}{5115--5124}.
\newblock


\bibitem[\protect\citeauthoryear{NT and Maehara}{NT and Maehara}{2019}]%
        {nt2019revisiting}
\bibfield{author}{\bibinfo{person}{Hoang NT} {and} \bibinfo{person}{Takanori
  Maehara}.} \bibinfo{year}{2019}\natexlab{}.
\newblock \showarticletitle{Revisiting graph neural networks: All we have is
  low-pass filters}.
\newblock \bibinfo{journal}{\emph{arXiv preprint arXiv:1905.09550}}
  (\bibinfo{year}{2019}).
\newblock


\bibitem[\protect\citeauthoryear{Pei, Wei, Chang, Lei, and Yang}{Pei
  et~al\mbox{.}}{2020}]%
        {pei2020geom}
\bibfield{author}{\bibinfo{person}{Hongbin Pei}, \bibinfo{person}{Bingzhe Wei},
  \bibinfo{person}{Kevin Chen-Chuan Chang}, \bibinfo{person}{Yu Lei}, {and}
  \bibinfo{person}{Bo Yang}.} \bibinfo{year}{2020}\natexlab{}.
\newblock \showarticletitle{Geom-gcn: Geometric graph convolutional networks}.
\newblock \bibinfo{journal}{\emph{arXiv preprint arXiv:2002.05287}}
  (\bibinfo{year}{2020}).
\newblock


\bibitem[\protect\citeauthoryear{Rong, Huang, Xu, and Huang}{Rong
  et~al\mbox{.}}{2019}]%
        {rong2019dropedge}
\bibfield{author}{\bibinfo{person}{Yu Rong}, \bibinfo{person}{Wenbing Huang},
  \bibinfo{person}{Tingyang Xu}, {and} \bibinfo{person}{Junzhou Huang}.}
  \bibinfo{year}{2019}\natexlab{}.
\newblock \showarticletitle{Dropedge: Towards deep graph convolutional networks
  on node classification}. In \bibinfo{booktitle}{\emph{International
  Conference on Learning Representations}}.
\newblock


\bibitem[\protect\citeauthoryear{Sen, Namata, Bilgic, Getoor, Galligher, and
  Eliassi-Rad}{Sen et~al\mbox{.}}{2008}]%
        {sen2008collective}
\bibfield{author}{\bibinfo{person}{Prithviraj Sen}, \bibinfo{person}{Galileo
  Namata}, \bibinfo{person}{Mustafa Bilgic}, \bibinfo{person}{Lise Getoor},
  \bibinfo{person}{Brian Galligher}, {and} \bibinfo{person}{Tina Eliassi-Rad}.}
  \bibinfo{year}{2008}\natexlab{}.
\newblock \showarticletitle{Collective classification in network data}.
\newblock \bibinfo{journal}{\emph{AI magazine}} \bibinfo{volume}{29},
  \bibinfo{number}{3} (\bibinfo{year}{2008}), \bibinfo{pages}{93--93}.
\newblock


\bibitem[\protect\citeauthoryear{Shah}{Shah}{2020}]%
        {shah2020scale}
\bibfield{author}{\bibinfo{person}{Neil Shah}.}
  \bibinfo{year}{2020}\natexlab{}.
\newblock \showarticletitle{Scale-Free, Attributed and Class-Assortative Graph
  Generation to Facilitate Introspection of Graph Neural Networks}.
\newblock \bibinfo{journal}{\emph{KDD Mining and Learning with Graphs}}
  (\bibinfo{year}{2020}).
\newblock


\bibitem[\protect\citeauthoryear{Shchur, Mumme, Bojchevski, and
  G{\"u}nnemann}{Shchur et~al\mbox{.}}{2018}]%
        {shchur2018pitfalls}
\bibfield{author}{\bibinfo{person}{Oleksandr Shchur},
  \bibinfo{person}{Maximilian Mumme}, \bibinfo{person}{Aleksandar Bojchevski},
  {and} \bibinfo{person}{Stephan G{\"u}nnemann}.}
  \bibinfo{year}{2018}\natexlab{}.
\newblock \showarticletitle{Pitfalls of graph neural network evaluation}.
\newblock \bibinfo{journal}{\emph{arXiv preprint arXiv:1811.05868}}
  (\bibinfo{year}{2018}).
\newblock


\bibitem[\protect\citeauthoryear{Shuman, Narang, Frossard, Ortega, and
  Vandergheynst}{Shuman et~al\mbox{.}}{2013}]%
        {shuman2013emerging}
\bibfield{author}{\bibinfo{person}{David~I Shuman}, \bibinfo{person}{Sunil~K
  Narang}, \bibinfo{person}{Pascal Frossard}, \bibinfo{person}{Antonio Ortega},
  {and} \bibinfo{person}{Pierre Vandergheynst}.}
  \bibinfo{year}{2013}\natexlab{}.
\newblock \showarticletitle{The emerging field of signal processing on graphs:
  Extending high-dimensional data analysis to networks and other irregular
  domains}.
\newblock \bibinfo{journal}{\emph{IEEE signal processing magazine}}
  \bibinfo{volume}{30}, \bibinfo{number}{3} (\bibinfo{year}{2013}),
  \bibinfo{pages}{83--98}.
\newblock


\bibitem[\protect\citeauthoryear{Tibshirani et~al\mbox{.}}{Tibshirani
  et~al\mbox{.}}{2014}]%
        {tibshirani2014adaptive}
\bibfield{author}{\bibinfo{person}{Ryan~J Tibshirani} {et~al\mbox{.}}}
  \bibinfo{year}{2014}\natexlab{}.
\newblock \showarticletitle{Adaptive piecewise polynomial estimation via trend
  filtering}.
\newblock \bibinfo{journal}{\emph{The Annals of Statistics}}
  \bibinfo{volume}{42}, \bibinfo{number}{1} (\bibinfo{year}{2014}),
  \bibinfo{pages}{285--323}.
\newblock


\bibitem[\protect\citeauthoryear{Veli{\v{c}}kovi{\'c}, Cucurull, Casanova,
  Romero, Lio, and Bengio}{Veli{\v{c}}kovi{\'c} et~al\mbox{.}}{2017}]%
        {velivckovic2017graph}
\bibfield{author}{\bibinfo{person}{Petar Veli{\v{c}}kovi{\'c}},
  \bibinfo{person}{Guillem Cucurull}, \bibinfo{person}{Arantxa Casanova},
  \bibinfo{person}{Adriana Romero}, \bibinfo{person}{Pietro Lio}, {and}
  \bibinfo{person}{Yoshua Bengio}.} \bibinfo{year}{2017}\natexlab{}.
\newblock \showarticletitle{Graph attention networks}.
\newblock \bibinfo{journal}{\emph{arXiv preprint arXiv:1710.10903}}
  (\bibinfo{year}{2017}).
\newblock


\bibitem[\protect\citeauthoryear{Wang, Sharpnack, Smola, and Tibshirani}{Wang
  et~al\mbox{.}}{2016}]%
        {wang2016trend}
\bibfield{author}{\bibinfo{person}{Yu-Xiang Wang}, \bibinfo{person}{James
  Sharpnack}, \bibinfo{person}{Alexander~J Smola}, {and}
  \bibinfo{person}{Ryan~J Tibshirani}.} \bibinfo{year}{2016}\natexlab{}.
\newblock \showarticletitle{Trend filtering on graphs}.
\newblock \bibinfo{journal}{\emph{The Journal of Machine Learning Research}}
  \bibinfo{volume}{17}, \bibinfo{number}{1} (\bibinfo{year}{2016}),
  \bibinfo{pages}{3651--3691}.
\newblock


\bibitem[\protect\citeauthoryear{Wu, Wang, Tyshetskiy, Docherty, Lu, and
  Zhu}{Wu et~al\mbox{.}}{2019b}]%
        {wu2019adversarial}
\bibfield{author}{\bibinfo{person}{Huijun Wu}, \bibinfo{person}{Chen Wang},
  \bibinfo{person}{Yuriy Tyshetskiy}, \bibinfo{person}{Andrew Docherty},
  \bibinfo{person}{Kai Lu}, {and} \bibinfo{person}{Liming Zhu}.}
  \bibinfo{year}{2019}\natexlab{b}.
\newblock \showarticletitle{Adversarial examples on graph data: Deep insights
  into attack and defense}.
\newblock \bibinfo{journal}{\emph{arXiv preprint arXiv:1903.01610}}
  (\bibinfo{year}{2019}).
\newblock


\bibitem[\protect\citeauthoryear{Wu, He, and Xu}{Wu et~al\mbox{.}}{2019a}]%
        {wu2019net}
\bibfield{author}{\bibinfo{person}{Jun Wu}, \bibinfo{person}{Jingrui He}, {and}
  \bibinfo{person}{Jiejun Xu}.} \bibinfo{year}{2019}\natexlab{a}.
\newblock \showarticletitle{Net: Degree-specific graph neural networks for node
  and graph classification}. In \bibinfo{booktitle}{\emph{Proceedings of the
  25th ACM SIGKDD International Conference on Knowledge Discovery \& Data
  Mining}}. \bibinfo{pages}{406--415}.
\newblock


\bibitem[\protect\citeauthoryear{Yang, Cohen, and Salakhudinov}{Yang
  et~al\mbox{.}}{2016}]%
        {yang2016revisiting}
\bibfield{author}{\bibinfo{person}{Zhilin Yang}, \bibinfo{person}{William
  Cohen}, {and} \bibinfo{person}{Ruslan Salakhudinov}.}
  \bibinfo{year}{2016}\natexlab{}.
\newblock \showarticletitle{Revisiting semi-supervised learning with graph
  embeddings}. In \bibinfo{booktitle}{\emph{International conference on machine
  learning}}. PMLR, \bibinfo{pages}{40--48}.
\newblock


\bibitem[\protect\citeauthoryear{Ying, He, Chen, Eksombatchai, Hamilton, and
  Leskovec}{Ying et~al\mbox{.}}{2018a}]%
        {ying2018graph}
\bibfield{author}{\bibinfo{person}{Rex Ying}, \bibinfo{person}{Ruining He},
  \bibinfo{person}{Kaifeng Chen}, \bibinfo{person}{Pong Eksombatchai},
  \bibinfo{person}{William~L Hamilton}, {and} \bibinfo{person}{Jure Leskovec}.}
  \bibinfo{year}{2018}\natexlab{a}.
\newblock \showarticletitle{Graph convolutional neural networks for web-scale
  recommender systems}. In \bibinfo{booktitle}{\emph{Proceedings of the 24th
  ACM SIGKDD}}. \bibinfo{pages}{974--983}.
\newblock


\bibitem[\protect\citeauthoryear{Ying, You, Morris, Ren, Hamilton, and
  Leskovec}{Ying et~al\mbox{.}}{2018b}]%
        {ying2018hierarchical}
\bibfield{author}{\bibinfo{person}{Zhitao Ying}, \bibinfo{person}{Jiaxuan You},
  \bibinfo{person}{Christopher Morris}, \bibinfo{person}{Xiang Ren},
  \bibinfo{person}{Will Hamilton}, {and} \bibinfo{person}{Jure Leskovec}.}
  \bibinfo{year}{2018}\natexlab{b}.
\newblock \showarticletitle{Hierarchical graph representation learning with
  differentiable pooling}. In \bibinfo{booktitle}{\emph{Advances in neural
  information processing systems}}. \bibinfo{pages}{4800--4810}.
\newblock


\bibitem[\protect\citeauthoryear{Zhao and Akoglu}{Zhao and Akoglu}{2019}]%
        {zhao2019pairnorm}
\bibfield{author}{\bibinfo{person}{Lingxiao Zhao} {and} \bibinfo{person}{Leman
  Akoglu}.} \bibinfo{year}{2019}\natexlab{}.
\newblock \showarticletitle{Pairnorm: Tackling oversmoothing in gnns}.
\newblock \bibinfo{journal}{\emph{arXiv preprint arXiv:1909.12223}}
  (\bibinfo{year}{2019}).
\newblock


\bibitem[\protect\citeauthoryear{Zhao, Liu, Neves, Woodford, Jiang, and
  Shah}{Zhao et~al\mbox{.}}{2020}]%
        {zhao2020data}
\bibfield{author}{\bibinfo{person}{Tong Zhao}, \bibinfo{person}{Yozen Liu},
  \bibinfo{person}{Leonardo Neves}, \bibinfo{person}{Oliver Woodford},
  \bibinfo{person}{Meng Jiang}, {and} \bibinfo{person}{Neil Shah}.}
  \bibinfo{year}{2020}\natexlab{}.
\newblock \showarticletitle{Data Augmentation for Graph Neural Networks}.
\newblock \bibinfo{journal}{\emph{arXiv preprint arXiv:2006.06830}}
  (\bibinfo{year}{2020}).
\newblock


\bibitem[\protect\citeauthoryear{Zhu, Rossi, Rao, Mai, Lipka, Ahmed, and
  Koutra}{Zhu et~al\mbox{.}}{2020a}]%
        {zhu2020graph}
\bibfield{author}{\bibinfo{person}{Jiong Zhu}, \bibinfo{person}{Ryan~A Rossi},
  \bibinfo{person}{Anup Rao}, \bibinfo{person}{Tung Mai},
  \bibinfo{person}{Nedim Lipka}, \bibinfo{person}{Nesreen~K Ahmed}, {and}
  \bibinfo{person}{Danai Koutra}.} \bibinfo{year}{2020}\natexlab{a}.
\newblock \showarticletitle{Graph neural networks with heterophily}.
\newblock \bibinfo{journal}{\emph{arXiv preprint arXiv:2009.13566}}
  (\bibinfo{year}{2020}).
\newblock


\bibitem[\protect\citeauthoryear{Zhu, Yan, Zhao, Heimann, Akoglu, and
  Koutra}{Zhu et~al\mbox{.}}{2020b}]%
        {zhu2020beyond}
\bibfield{author}{\bibinfo{person}{Jiong Zhu}, \bibinfo{person}{Yujun Yan},
  \bibinfo{person}{Lingxiao Zhao}, \bibinfo{person}{Mark Heimann},
  \bibinfo{person}{Leman Akoglu}, {and} \bibinfo{person}{Danai Koutra}.}
  \bibinfo{year}{2020}\natexlab{b}.
\newblock \showarticletitle{Beyond homophily in graph neural networks: Current
  limitations and effective designs}.
\newblock \bibinfo{journal}{\emph{arXiv preprint arXiv:2006.11468}}
  (\bibinfo{year}{2020}).
\newblock


\bibitem[\protect\citeauthoryear{Zhu, Wang, Shi, Ji, and Cui}{Zhu
  et~al\mbox{.}}{2021}]%
        {zhu2021interpreting}
\bibfield{author}{\bibinfo{person}{Meiqi Zhu}, \bibinfo{person}{Xiao Wang},
  \bibinfo{person}{Chuan Shi}, \bibinfo{person}{Houye Ji}, {and}
  \bibinfo{person}{Peng Cui}.} \bibinfo{year}{2021}\natexlab{}.
\newblock \showarticletitle{Interpreting and Unifying Graph Neural Networks
  with An Optimization Framework}.
\newblock \bibinfo{journal}{\emph{arXiv preprint arXiv:2101.11859}}
  (\bibinfo{year}{2021}).
\newblock


\bibitem[\protect\citeauthoryear{Z{\"u}gner and G{\"u}nnemann}{Z{\"u}gner and
  G{\"u}nnemann}{2019}]%
        {zugner2019adversarial}
\bibfield{author}{\bibinfo{person}{Daniel Z{\"u}gner} {and}
  \bibinfo{person}{Stephan G{\"u}nnemann}.} \bibinfo{year}{2019}\natexlab{}.
\newblock \showarticletitle{Adversarial attacks on graph neural networks via
  meta learning}.
\newblock \bibinfo{journal}{\emph{arXiv preprint arXiv:1902.08412}}
  (\bibinfo{year}{2019}).
\newblock


\end{thebibliography}
\bibliographystyle{ACM-Reference-Format}

\end{document}